\pgfplotsset{compat=1.18}
\pgfplotsset{
    discard if not/.style 2 args={
        x filter/.append code={
            \edef\tempa{\thisrow{#1}}
            \edef\tempb{#2}
            \ifx\tempa\tempb
            \else
                
            \fi
        }
    }
}
\definecolor{icmlgreen}{HTML}{009E73}
\definecolor{icmlblue}{HTML}{0072B2}
\definecolor{icmlmagenta}{HTML}{CC79A7}
\definecolor{icmlorange}{HTML}{D55E00}
\theoremstyle{plain}
\newtheorem{theorem}{Theorem}[section]
\newtheorem{proposition}[theorem]{Proposition}
\newtheorem{lemma}[theorem]{Lemma}
\newtheorem{corollary}[theorem]{Corollary}
\theoremstyle{definition}
\newtheorem{definition}[theorem]{Definition}
\newtheorem{assumption}[theorem]{Assumption}
\theoremstyle{remark}
\newtheorem{remark}[theorem]{Remark}
\theoremstyle{plain}
\newcommand{\newcustomtheorem}[2]{
  \newenvironment{#1}[1]{
    \renewcommand{\thetheorem}{\ref{##1}}
    \begin{theorem}}{\end{theorem}}
}
\newcommand{\newcustomproposition}[2]{
  \newenvironment{#1}[1]{
    \renewcommand{\thetheorem}{\ref{##1}}
    \begin{proposition}}{\end{proposition}}
}
\theoremstyle{definition}
\theoremstyle{remark}
\newcommand{\citepos}[1]{\citeauthor{#1}'s (\citeyear{#1})}
\DeclareMathOperator{\arcsinh}{arcsinh}
\icmltitlerunning{Fast and Geometrically Grounded Lorentz Neural Networks}
\begin{document}

\twocolumn[
\icmltitle{Fast and Geometrically Grounded Lorentz Neural Networks}



\icmlsetsymbol{equal}{*}

\begin{icmlauthorlist}
\icmlauthor{Robert van der Klis}{equal,eth}
\icmlauthor{Ricardo Chávez Torres}{equal}
\icmlauthor{Max van Spengler}{uva}
\icmlauthor{Yuhui Ding}{eth}
\icmlauthor{Thomas Hofmann}{eth}
\icmlauthor{Pascal Mettes}{uva}
\end{icmlauthorlist}

\icmlaffiliation{uva}{University of Amsterdam}
\icmlaffiliation{eth}{Department of Computer Science, ETH Zurich}

\icmlcorrespondingauthor{Robert van der Klis}{robert.vanderklis@inf.ethz.ch}
\icmlcorrespondingauthor{Ricardo Chávez Torres}{chaveztorresricardo@gmail.com}

\icmlkeywords{Machine Learning, ICML}

\vskip 0.3in
]




\printAffiliationsAndNotice{\icmlEqualContribution} 

\begin{strip}
    \par\vspace{-7em} 
    \centering
    \captionsetup{type=figure}
    \begin{subfigure}[b]{0.35\linewidth}
        \centering
        \includegraphics[width=\linewidth]{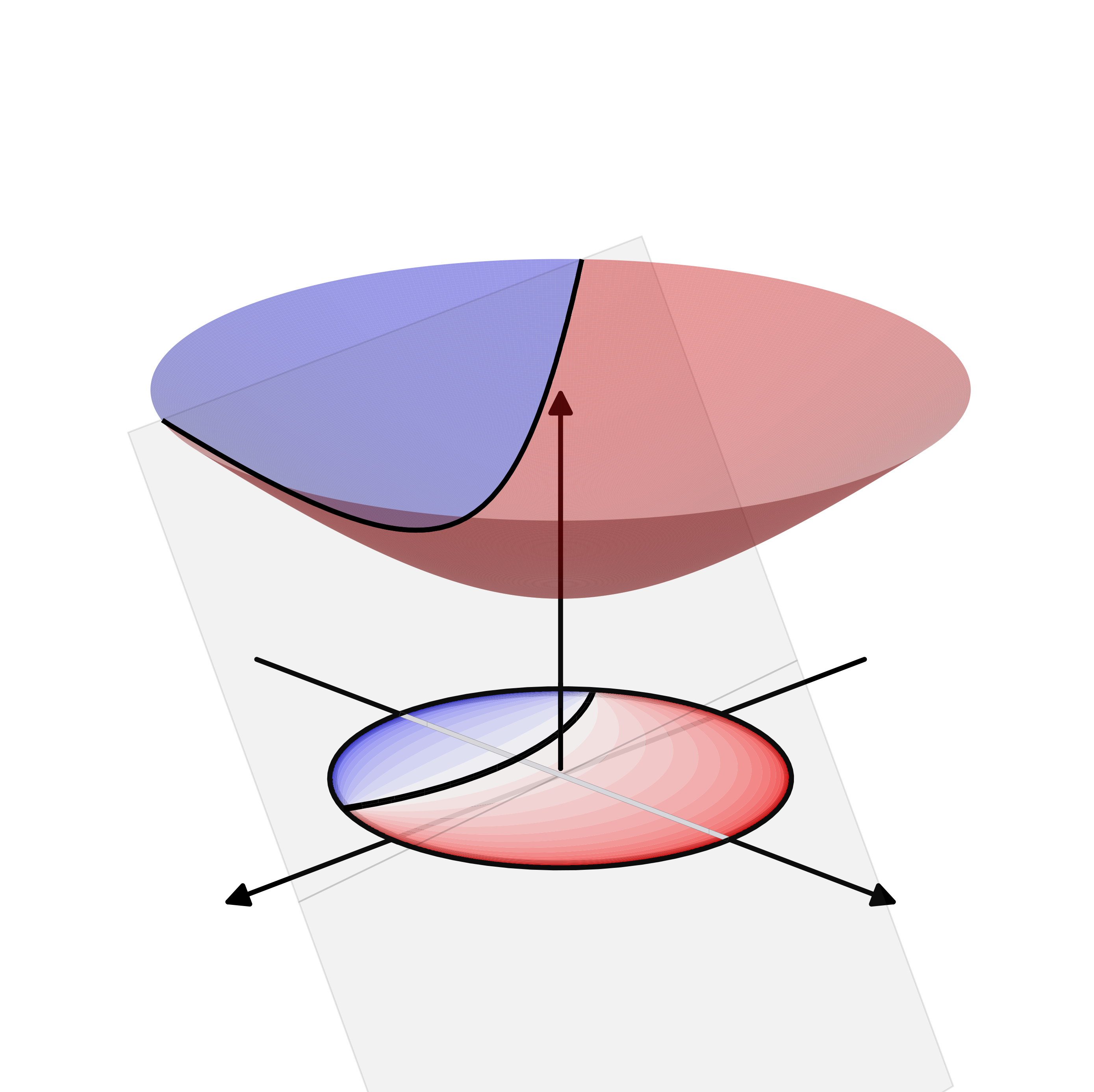}
        \caption{Our Lorentz linear layer.}
        \label{fig:sub3}
    \end{subfigure}
    \hspace{10pt}
    \begin{subfigure}[b]{0.35\linewidth}
        \begin{subfigure}[b]{0.48\linewidth}
            \centering  \includegraphics[width=\linewidth]{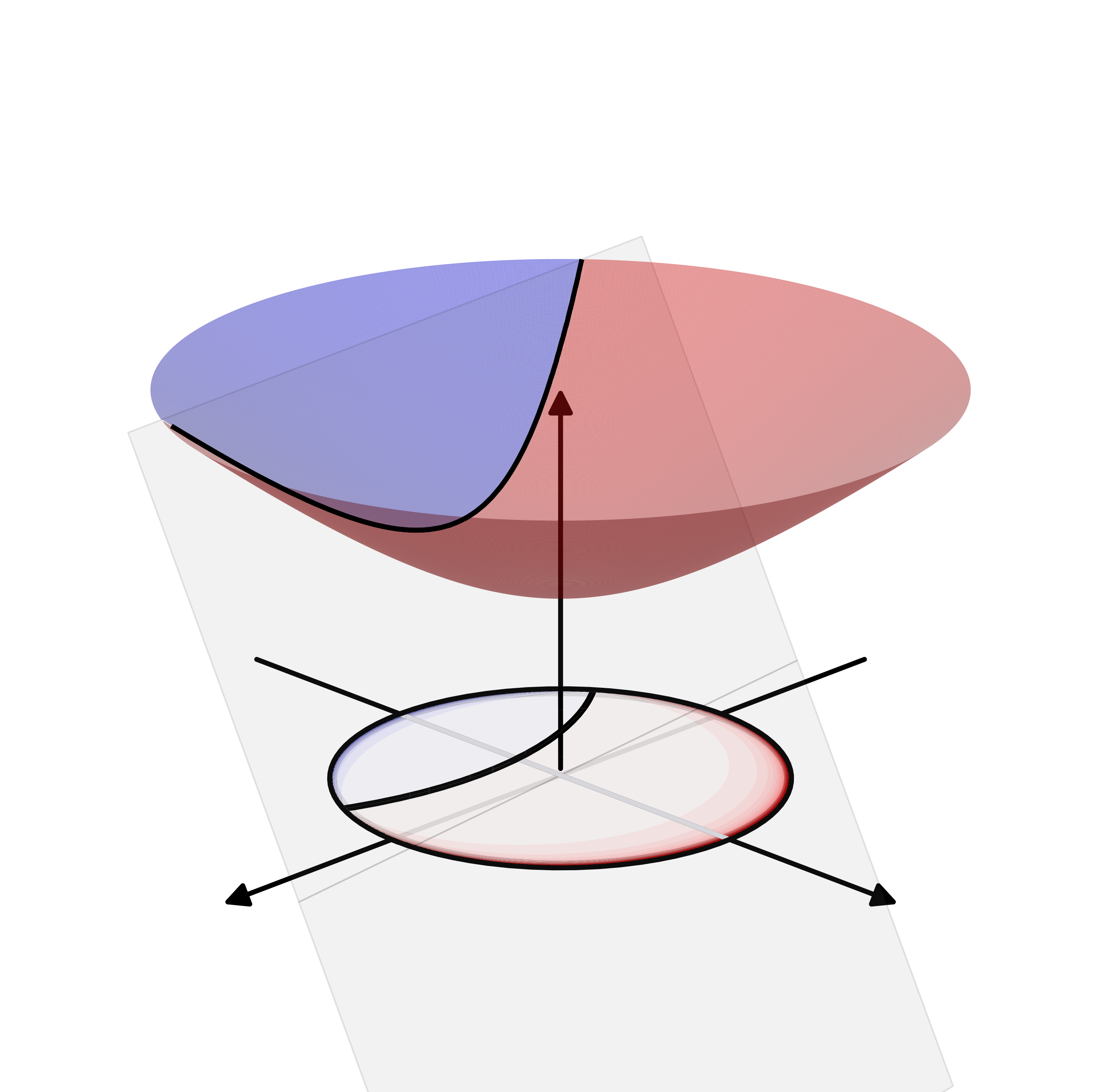}
        \end{subfigure}
        \hfill
        \begin{subfigure}[b]{0.48\linewidth}
            \centering            \includegraphics[width=\linewidth]{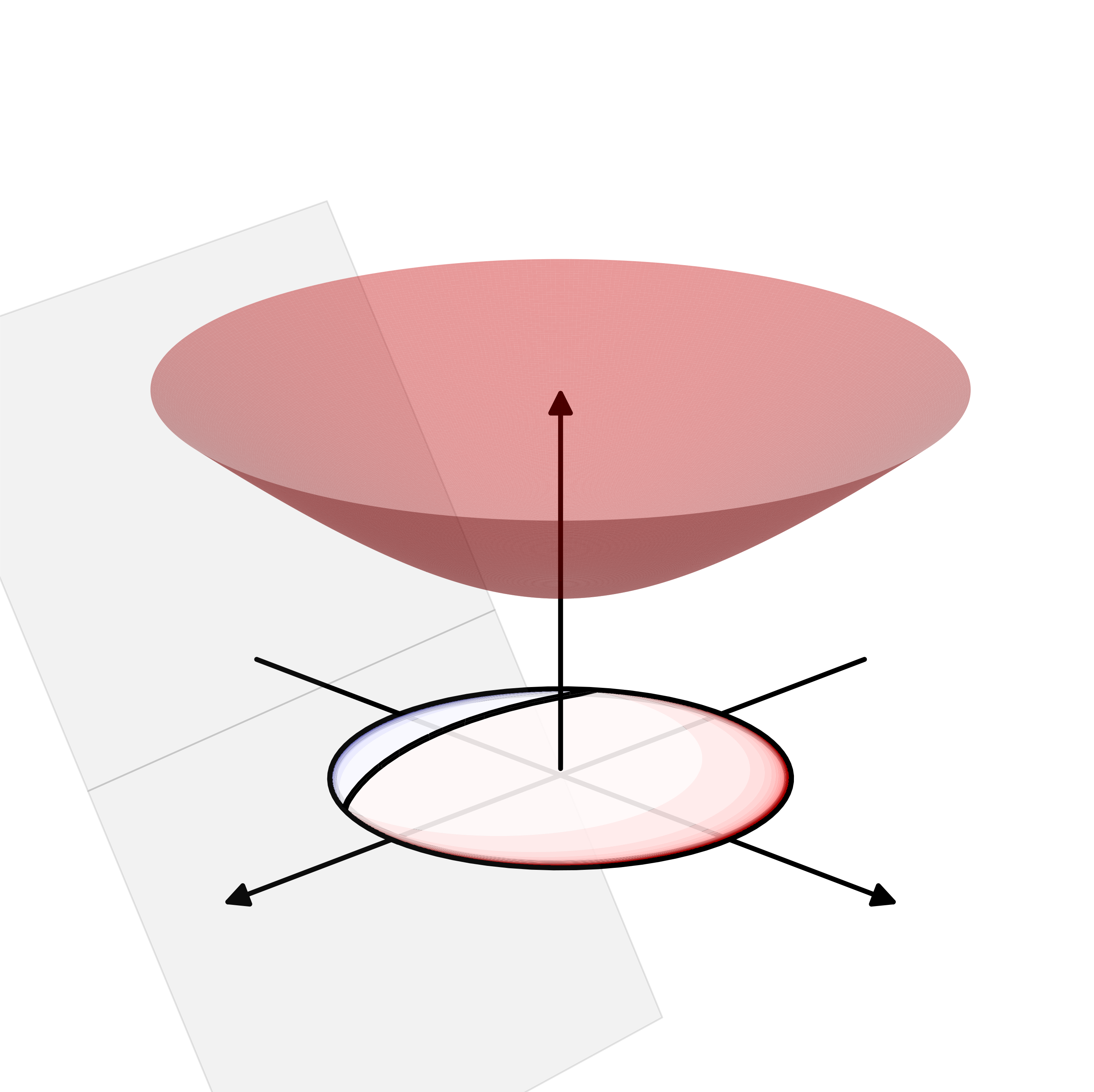}
        \end{subfigure}
        \\
        \begin{subfigure}[b]{0.48\linewidth}
            \centering            \includegraphics[width=\linewidth]{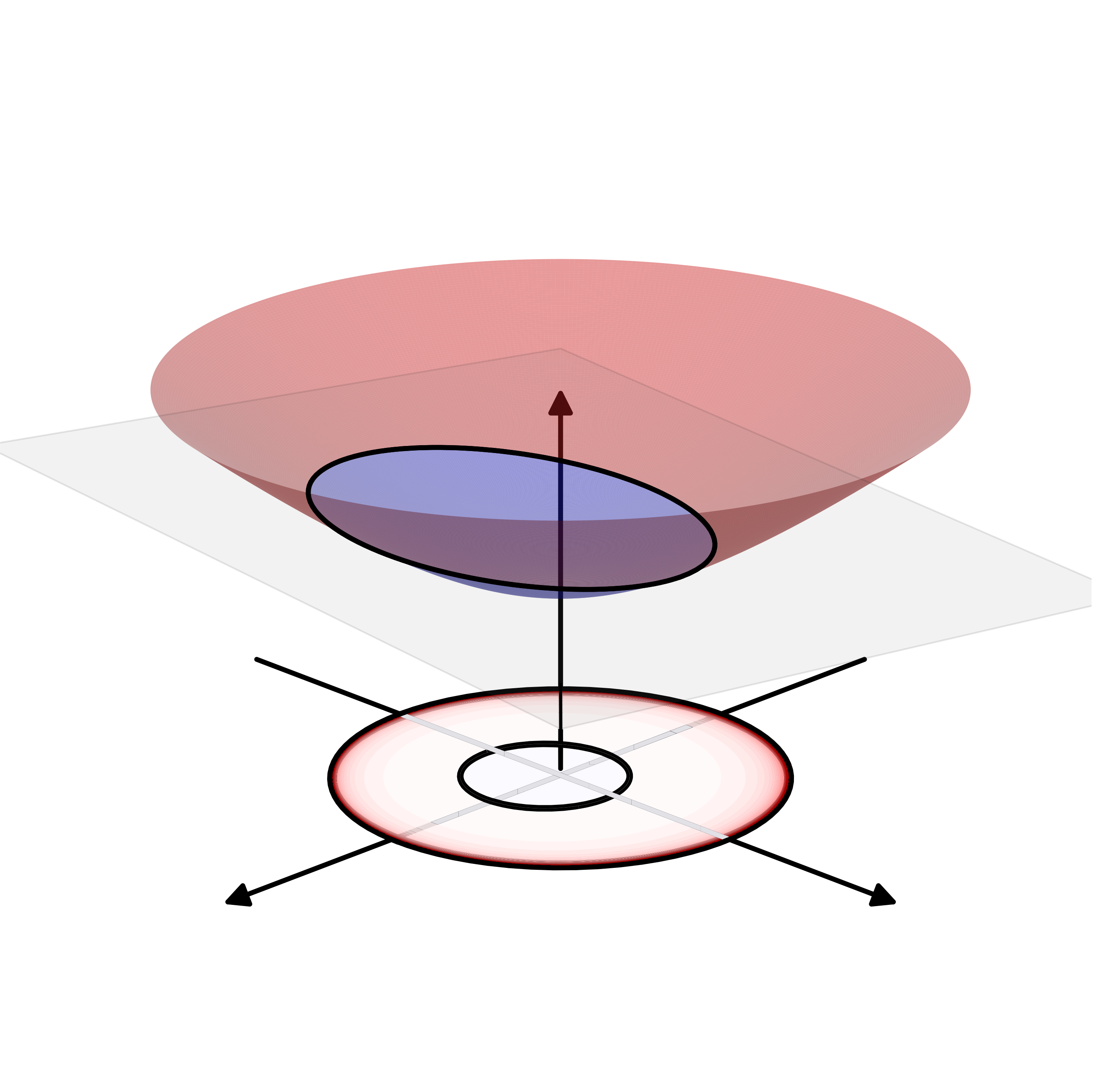}
        \end{subfigure}
        \hfill
        \begin{subfigure}[b]{0.48\linewidth}
            \centering            \includegraphics[width=\linewidth]{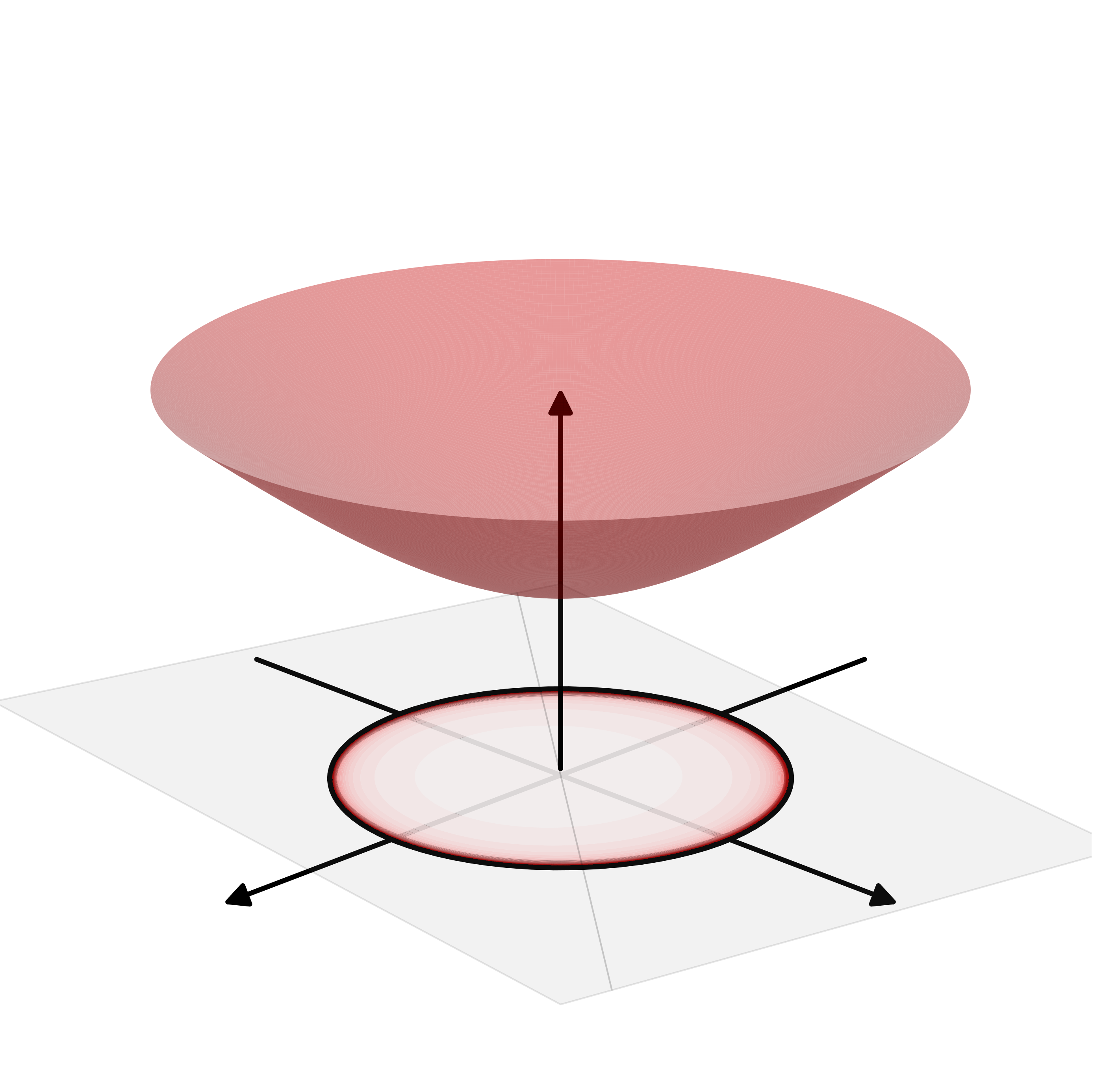}
        \end{subfigure}
        \caption{\citepos{chenFullyHyperbolicNeural2022} Lorentz linear layer.}
        \label{fig:sub4}
    \end{subfigure}
    \par\vspace{0em} 
    \caption{
    \textbf{Visualizing decision boundaries in Lorentz model, with level curves projected on the Poincaré disk.}
    The black line denotes a class boundary.
    The blue and red colors denote positive and negative distances to the decision boundary. \textbf{(a)}: We propose a formulation of a linear layer in the Lorentz model which ensures that the decision boundary forms a proper hyperbolic hyperplane. \textbf{(b)}: The current leading formulation of the Lorentzian linear layer, where intersections do not form valid hyperbolic hyperplanes in general, leading to unwanted configurations such as in the upper right and lower examples, as well as a lack of exponential growth of distances.
    }
    \label{fig:three_plots}
\end{strip}

\begin{abstract}
Hyperbolic space is quickly gaining traction as a promising geometry for hierarchical and robust representation learning. A core open challenge is the development of a mathematical formulation of hyperbolic neural networks that is both efficient and captures the key properties of hyperbolic space. The Lorentz model of hyperbolic space has been shown to enable both fast forward and backward propagation. However, we prove that, with the current formulation of Lorentz linear layers, the hyperbolic norms of the outputs scale logarithmically with the number of gradient descent steps, nullifying the key advantage of hyperbolic geometry. We propose a new Lorentz linear layer grounded in the well-known ``distance-to-hyperplane" formulation. We prove that our formulation results in the usual linear scaling of output hyperbolic norms with respect to the number of gradient descent steps. Our new formulation, together with further algorithmic efficiencies through Lorentzian activation functions and a new caching strategy results in neural networks fully abiding by hyperbolic geometry while simultaneously bridging the computation gap to Euclidean neural networks. Code available at: \url{https://github.com/robertdvdk/hyperbolic-fully-connected}.
\end{abstract}

\section{Introduction}
Hyperbolic geometry has garnered significant interest in recent years in deep learning. Amongst others, hyperbolic manifolds hold promise for embedding hierarchical structures with arbitrarily low distortion \citep{sarkar2012}.
Hierarchical structures are ubiquitous: from natural language to computer vision and graph tasks, all either have inherent hierarchical organizations or benefit from hierarchical knowledge. In semantics, the relations between word semantics are hierarchical, which is important for word embeddings \citep{nickelPoincareEmbeddingsLearning2017, tifrea2018pPoincareGloveHyperbolic2018} and for language models \citep{patil2025hyperboliclargelanguagemodels, he2025helm, yang2025hyperbolic}. In computer vision, hyperbolic learning improves hierarchical visual recognition \citep{khrulkovHyperbolicImageEmbeddings2020, Kwon_2024_CVPR, Wang_2025_ICCV}, robustness \citep{vanSpengler2025AdversarialAttacks, Bi_Yi_Zhan_Ji_Xia_2025}, few-shot learning \citep{Liu_2020_CVPR, Moreira_2024_WACV, HAMZAOUI2024151}, and more \citep{Mettes2024Survey, He2025surveyHyperbolicFoundationModels}.
In graph tasks, ``hierarchy is a central organizing principle of complex networks" \citep{clausetHierarchicalStructurePrediction2008}, with hyperbolic graph networks as powerful and broadly applicable tools \citep{chamiHyperbolicGraphConvolutional2019a, NEURIPS2019_103303dd, NEURIPS2023_8b6a8b01}.

However, building fully hyperbolic neural networks is not straightforward: much of deep learning relies on the definition of the fully connected layer, which serves as the basic building block for more complex components such as convolutions, attention mechanisms, or graph message-passing operations. In past years, seminal works by \citet{ganeaHyperbolicNeuralNetworks2018} and \citet{shimizuHyperbolicNeuralNetworks2020} formulated a hyperbolic linear layer in the Poincaré model. While theoretically sound, operating in the Poincaré model is computationally inefficient \citep{chenFullyHyperbolicNeural2022, pmlr-v202-mishne23a}. As a consequence, several recent works started operating in the Lorentz model of hyperbolic space \citep{chenFullyHyperbolicNeural2022, bdeir2023FullyVision, yangHypformerExploringEfficient2024a} as a computationally much more efficient alternative.

In this work, however, we find that the current formulation of hyperbolic layers in the Lorentz model is inconsistent with the geometric interpretation of Euclidean layers, interestingly resulting in a loss of precisely the efficient hierarchical embedding property of hyperbolic geometry that we want to encapsulate in our neural networks, see Figure \ref{fig:three_plots}.
This motivates us to propose a new formulation of the foundational building block of deep learning in the Lorentz model of hyperbolic space.

We propose a new Lorentz linear layer. We strive to get the best out of the theoretical soundness of the Poincaré model formulation and the computational efficiency of the Lorentz model. We make the following contributions:
\begin{enumerate}[topsep=0pt, itemsep=-1ex]
    \item \textbf{Proof of the pathology:} We prove that the hyperbolic norms of embeddings generated by current formulations of Lorentz linear layers scale logarithmically rather than linearly with the number of gradient steps. We demonstrate this behaviour on a toy experiment.
    \item \textbf{Geometric unification:} We formulate a new Lorentz linear layer based on the ``distance-to-hyperplane" formulation used in Poincaré models, and we prove that this new layer resolves the pathology.
    \item \textbf{Algorithmic efficiency:} We introduce Lorentzian activations, a family of activation functions that simplify the computation of our layer. We also introduce a way to cache our parameters, yielding inference speeds of our Lorentz linear layer that are 2.9x as fast as the current state-of-the-art Lorentz neural networks, and 8.3x as fast as current state-of-the-art Poincaré neural networks.
    \item \textbf{FGG-LNN:} We use all these building blocks to create Fast and Geometrically Grounded Lorentz Neural Networks (FGG-LNN). We show that this new formulation, together with a well-optimized implementation, speeds up training times of a ResNet-18 over state-of-the-art Lorentz neural networks by 3.5x, and over state-of-the-art Poincaré neural networks by 7.5x, closing the gap towards Euclidean networks.
\end{enumerate}

\section{Preliminaries}
A Riemannian manifold is a tuple $(M, g)$, where $M$ is a smooth manifold and $g$ is a Riemannian metric, defining a smooth inner product $g_x(\cdot, \cdot) = \langle \cdot, \cdot \rangle_x$ on the tangent space $T_x M$ at each point $x \in M$. The metric induces a distance function $d_M(x, y)$, which is the length of the shortest path (geodesic) connecting $x, y \in M$.

We use the Lorentz model (or hyperboloid model) of hyperbolic geometry. Let $\mathbb{R}^{D+1}$ be the ambient space (Minkowski space). For any vector $\mathbf{x} \in \mathbb{R}^{D+1}$, we separate the time component and the space components and denote them as $\mathbf{x} = (x_1, \overline{\mathbf{x}})$, where $x_1 \in \mathbb{R}$ and $\overline{\mathbf{x}} \in \mathbb{R}^D$.
This space is equipped with the Lorentz inner product denoted $\circ$, with Minkowski metric $\mathbf{I}_{1,D} = \text{diag}(-1, 1, \dots, 1)$:
\begin{equation}
    \mathbf{x} \circ \mathbf{y} = -x_1 y_1 + \overline{\mathbf{x}}^\top \overline{\mathbf{y}} = \mathbf{x}^\top \mathbf{I}_{1,D}\mathbf{y}.
\end{equation}

The hyperbolic space $\mathbb{L}_\kappa^D$ with constant negative curvature $-\kappa$ ($\kappa > 0$) is defined as the upper sheet of a hyperboloid:
\begin{equation}
    \mathbb{L}_\kappa^D = \left\{ \mathbf{z} \in \mathbb{R}^{D+1} \mid \mathbf{z} \circ \mathbf{z} = -1/\kappa, \, z_1 > 0 \right\}.
\end{equation}
The geodesic distance between any two points in $\mathbb{L}_\kappa^D$ is: 
\begin{equation}
    \label{eqn:dist}
    d(\mathbf{x}, \mathbf{y}) = \frac{1}{\sqrt{\kappa}}\text{arccosh}(-\kappa (\mathbf{x} \circ \mathbf{y})).
\end{equation}
The tangent space $T_{\mathbf{z}} \mathbb{L}_\kappa^D$ at a point $\mathbf{z}$ is the set of vectors Minkowski-orthogonal to $\mathbf{z}$:
\begin{equation}
    T_{\mathbf{z}} \mathbb{L}_\kappa^D = \{ \mathbf{v} \in \mathbb{R}^{D+1} \mid \mathbf{z} \circ \mathbf{v} = 0 \}.
\end{equation}
We define the \textit{origin} of the manifold as $\mathbf{o} = (\frac{1}{\sqrt{\kappa}}, 0, \dots, 0)$.
For learning on manifolds, we require maps to transition between the manifold and its tangent spaces, as well as a way to move vectors across the manifold, specifically the exponential map from tangent space to manifold ($\text{Exp}_{\mathbf{x}}: T_{\mathbf{x}} \mathbb{L}_\kappa^D \to \mathbb{L}_\kappa^D$), the logarithmic map for the reverse ($\text{Log}_{\mathbf{x}}: \mathbb{L}_\kappa^D \to T_{\mathbf{x}} \mathbb{L}_\kappa^D$), and the parallel transport which transports a tangent vector $\mathbf{v}$ along the geodesic from $\mathbf{x}$ to $\mathbf{y}$ while preserving the inner product ($\Gamma_{\mathbf{x} \to \mathbf{y}}: T_{\mathbf{x}} \mathbb{L}_\kappa^D \to T_{\mathbf{y}} \mathbb{L}_\kappa^D$). We refer to \Cref{appendix:hyperbolic_ops} for full definitions.

There are several different (pseudo)norms we will use in this paper. The first is the Euclidean norm: we will denote this by $\|\mathbf{v}\|_E^2 = \mathbf{v} \cdot \mathbf{v}$. Analogous to the Euclidean case, we define the pseudonorm $\|\mathbf{v}\|_\mathcal{L}^2 = \mathbf{v} \circ \mathbf{v}$ (not positive definite). When $\mathbf{v} \in T_\mathbf{z} \mathbb{L}_\kappa^D$, we write $\|\mathbf{v}\|_\mathbf{z}^2 = \mathbf{v} \circ \mathbf{v}$, which is a proper norm when restricted to the tangent space. 

Finally, we define the \textit{hyperbolic norm} of a point $\mathbf{z} \in \mathbb{L}_\kappa^D$ as its 
geodesic distance to the origin:
\begin{equation}
    \|\mathbf{z}\|_\mathcal{H} = d(\mathbf{z}, \mathbf{o}) = \frac{1}{\sqrt{\kappa}}\text{arccosh}(\sqrt{\kappa} \, z_1).
\end{equation}

\section{Pathology in Current Lorentz Linear Layers}
\label{sec:pathology}
Current Lorentz linear layers follow \citet{chenFullyHyperbolicNeural2022} and calculate $\mathbf{Wx} \in \mathbb{R}^{D_{\text{out}}}$ as in the Euclidean setting, and treat this as the spatial component of the output, such that $\overline{\mathbf{y}} = \mathbf{Wx}$. To ensure that this point lies on the hyperboloid, the time coordinate $y_1$ is then defined as
\begin{equation}
\label{eqn:time_component}
    y_1 = \sqrt{\|\mathbf{Wx}\|_E^2 + \frac{1}{\kappa}}.
\end{equation}

A fundamental problem with this approach is that the weights of the layer are (approximately) linearly related\footnote{By ``approximate" we mean that the weights are linearly related to the spatial component of the output, and the time component is related to the spatial component as in \Cref{eqn:time_component}, which is asymptotically linear.} to the Euclidean norms of outputs generated by that layer.

In Euclidean space, this is desirable, but in the Lorentz model, it is not, as distances in the Lorentz model grow logarithmically with the Euclidean norm of the coordinates (\Cref{eqn:dist}). This means that, in order to embed a hierarchy of depth $h$, the Euclidean norm of the embeddings needs to be on the order of $e^h$. Given the approximate linear relation between the weights and the output norms, it follows that the Frobenius norm of the weight matrix must be on the order of $e^h$ as well. With bounded updates, this requires a number of gradient descent steps that is proportional to $e^h$.

We will now formalise the above intuition. First, we will show that the increase in hyperbolic norm achievable by \citepos{chenFullyHyperbolicNeural2022}  definition grows at most logarithmically with the number of gradient descent steps. Second, we will show that it is not possible to simply shrink the distances between nodes in the hierarchy to solve the problem. Finally, we will show that this implies that, with \citepos{chenFullyHyperbolicNeural2022} definition, the number of gradient updates required to embed a hierarchy of depth $h$ with low distortion (i.e. approximately preserving pairwise graph distances) grows exponentially with $h$.

\begin{proposition}
    \label{prop:upperbound}
    Let $\mathbf{W}_n$ be a weight matrix after $n$ steps of optimization with updates bounded in Frobenius norm by $\delta$ (i.e., $\|\Delta \mathbf{W}_t\|_F \leq \delta$). For any input $\mathbf{x} \in \mathbb{L}_\kappa^{D_\text{in}}$ and output $\mathbf{y} \in \mathbb{L}_\kappa^{D_{\text{out}}}$, the amount by which $\mathbf{W}_n$ can increase the distance to the origin is upper bounded by $\mathcal{O}(\ln n)$: we have
    $$d(\mathbf{o}, \mathbf{y}) - d(\mathbf{o}, \mathbf{x}) = \mathcal{O}(\ln n).$$
\end{proposition}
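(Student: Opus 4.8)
The plan is to bound the growth of $\|\mathbf{W}_n\|_F$ linearly in $n$ and then translate this into a logarithmic bound on the hyperbolic norm of the output via the $\arccosh$ in the distance formula.

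First, I would establish the elementary bound $\|\mathbf{W}_n\|_F \le \|\mathbf{W}_0\|_F + n\delta$ by the triangle inequality applied to $\mathbf{W}_n = \mathbf{W}_0 + \sum_{t=1}^n \Delta\mathbf{W}_t$. Next, since $\mathbf{x} \in \mathbb{L}_\kappa^{D_\text{in}}$ satisfies $\|\overline{\mathbf{x}}\|_E^2 = x_1^2 - 1/\kappa$, write $\overline{\mathbf{y}} = \mathbf{W}_n\mathbf{x}$ (in the affine/homogeneous sense used by the layer) and bound $\|\overline{\mathbf{y}}\|_E \le \|\mathbf{W}_n\|_F \|\mathbf{x}\|_E$. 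Then from \Cref{eqn:time_component}, $y_1 = \sqrt{\|\overline{\mathbf{y}}\|_E^2 + 1/\kappa} \le \sqrt{\|\mathbf{W}_n\|_F^2\|\mathbf{x}\|_E^2 + 1/\kappa}$, so $\sqrt{\kappa}\,y_1 \le C(\|\mathbf{W}_0\|_F + n\delta)$ for a constant $C$ depending on $\kappa$ and $\|\mathbf{x}\|_E$ (which is itself a fixed function of $d(\mathbf{o},\mathbf{x})$).

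Then I would plug this into the hyperbolic norm expression $\|\mathbf{y}\|_\mathcal{H} = \tfrac{1}{\sqrt\kappa}\arccosh(\sqrt\kappa\, y_1)$ and use $\arccosh(u) = \ln\!\bigl(u + \sqrt{u^2-1}\bigr) \le \ln(2u)$ for $u \ge 1$ to conclude
\[
d(\mathbf{o},\mathbf{y}) \le \tfrac{1}{\sqrt\kappa}\ln\!\bigl(2C(\|\mathbf{W}_0\|_F + n\delta)\bigr) = \mathcal{O}(\ln n),
\]
and since $d(\mathbf{o},\mathbf{x})\ge 0$, the difference $d(\mathbf{o},\mathbf{y}) - d(\mathbf{o},\mathbf{x})$ is also $\mathcal{O}(\ln n)$.

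The main obstacle is not the analysis itself — which is a short chain of inequalities — but pinning down exactly what counts as a single "linear layer" and what the bounded-update assumption applies to. In particular, if the layer includes a bias implemented via a reference point or a parallel-transport construction, or a normalization/scaling factor, one must check that these auxiliary parameters are also covered by the $\delta$-bounded-update hypothesis (or are otherwise bounded), so that $\|\overline{\mathbf{y}}\|_E$ remains controlled by $\mathcal{O}(n)$. I would handle this by stating the lemma for the affine map $\mathbf{x}\mapsto \mathbf{W}\mathbf{x}$ with $\mathbf{W}$ absorbing any bias column, noting that concatenating a bounded bias only changes $\|\mathbf{W}\|_F$ by an additive constant, and remarking that any extra multiplicative scale $s$ with bounded log-update contributes only an additional $\mathcal{O}(n)$ factor inside the logarithm, hence still $\mathcal{O}(\ln n)$ overall.
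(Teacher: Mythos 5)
Your proposal is correct and follows essentially the same route as the paper's proof: bound $\|\mathbf{W}_n\|_F \le \|\mathbf{W}_0\|_F + n\delta$ by the triangle inequality, use the (approximately) linear relation $\overline{\mathbf{y}} = \mathbf{W}_n\mathbf{x}$ to control $y_1$, and exploit $\arccosh(u) \le \ln(2u)$ to get the $\mathcal{O}(\ln n)$ bound. The only cosmetic difference is that the paper bounds the ratio $y_1/x_1 \le \sqrt{2}\,\|\mathbf{W}_n\|_F$, giving a constant uniform in $\mathbf{x}$, whereas you drop $d(\mathbf{o},\mathbf{x}) \ge 0$ and absorb $\|\mathbf{x}\|_E$ into the constant, which is perfectly adequate for the statement about a fixed input.
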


\begin{proof}
    See \Cref{appendix:proofs}.
\end{proof}
The upper bound that we assume in the proposition is easily satisfied: gradient clipping, for example, satisfies the assumption. However, even without gradient clipping, this condition typically holds in stable training regimes: standard optimization practices, such as the use of small learning rates and weight decay, inherently prevent the norm of the updates from growing without bound.

Now, we show why the above is a problem: namely, we prove a lower bound on the distance between nodes and their children for low-distortion tree embeddings: 
\begin{proposition}
    \label{prop:treedistance}
    Embedding an $m$-ary tree with low distortion in $\mathbb{L}_\kappa^{D}$ requires a minimum distance between nodes and their children of
    $$s = \Omega\left(\frac{\ln m}{(D_{\text{in}}-1)\sqrt{\kappa}}\right).$$
\end{proposition}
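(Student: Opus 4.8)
We would prove this by a volume-packing argument in $\mathbb{L}_\kappa^{D_{\text{in}}}$ that exploits the exponential growth of hyperbolic balls; this is the hyperbolic analogue of the classical lower bounds for embedding high-branching trees. Fix an internal node $v$ of the $m$-ary tree together with its $m$ children $c_1,\dots,c_m$, and let $f$ be the embedding. The first step is to turn ``low distortion'' into two concrete metric facts: there is a global scale $\sigma>0$ and a constant distortion factor $\Delta=\mathcal{O}(1)$ with $d(f(v),f(c_i))\le\Delta\sigma$ for every child (because $d_T(v,c_i)=1$) and $d(f(c_i),f(c_j))\ge 2\sigma/\Delta$ for every pair of distinct children (because $d_T(c_i,c_j)=2$). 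Since any parent--child distance is then at least $\sigma/\Delta$, the quantity $s$ in the statement satisfies $s\ge\sigma/\Delta$, so it suffices to lower bound $\sigma$.

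The second step is the packing itself. The balls $B(f(c_i),\sigma/\Delta)$ have pairwise disjoint interiors, because their centres are at distance at least $2\sigma/\Delta$; and each of them lies inside $B(f(v),(\Delta+1/\Delta)\sigma)$ by the triangle inequality. By isometry-invariance of the Riemannian volume all the small balls have the same volume, so
\[
m\cdot\mathrm{Vol}\!\big(B(\tfrac{\sigma}{\Delta})\big)\;\le\;\mathrm{Vol}\!\big(B((\Delta+\tfrac{1}{\Delta})\sigma)\big).
\]

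The third step estimates the two volumes via $\mathrm{Vol}(B(\rho))=\omega_{D_{\text{in}}-1}\,\kappa^{-D_{\text{in}}/2}\int_0^{\sqrt{\kappa}\rho}\sinh^{D_{\text{in}}-1}(u)\,du$, so that the normalising constants cancel in the ratio. For the numerator we use $\sinh u\le\tfrac{1}{2}e^{u}$ to get an upper bound of order $e^{(D_{\text{in}}-1)\sqrt{\kappa}(\Delta+1/\Delta)\sigma}$; for the denominator we use $\sinh u\ge\tfrac14 e^{u}$ on $u\ge 1$ (and the crude $\sinh u\ge u$ near $0$) to get a lower bound of order $e^{(D_{\text{in}}-1)\sqrt{\kappa}\sigma/\Delta}$ up to a factor that is at most $2^{\mathcal{O}(D_{\text{in}})}$. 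Dividing, the $\sigma/\Delta$ parts of the exponents cancel, leaving $m\le 2^{\mathcal{O}(D_{\text{in}})}\,e^{(D_{\text{in}}-1)\sqrt{\kappa}\,\Delta\,\sigma}$. Taking logarithms and solving for $\sigma$ gives $\sigma=\Omega\big(\ln m/((D_{\text{in}}-1)\sqrt{\kappa})\big)$, valid once $\ln m$ dominates the $\mathcal{O}(D_{\text{in}})$ additive term, i.e. for $m$ large relative to $D_{\text{in}}$ (the regime of interest); combining with $s\ge\sigma/\Delta$ yields the claimed bound.

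The main obstacle is the bookkeeping in the volume comparison: obtaining a lower bound on $\mathrm{Vol}(B(\sigma/\Delta))$ that is genuinely exponential in $(D_{\text{in}}-1)\sqrt{\kappa}\,\sigma/\Delta$ requires $\sqrt{\kappa}\,\sigma$ to be bounded away from $0$, so one must separately check (using the polynomial bound $\sinh u\ge u$) that the alternative small-$\sigma$ regime can only occur when $m$ is itself bounded in terms of $D_{\text{in}}$; and one must verify that the lower-order terms coming from $\Delta$ and from the $2^{\mathcal{O}(D_{\text{in}})}$ factor do not absorb $\ln m$, which is exactly what fixes the precise $\Omega(\cdot)$ and the ``$m$ sufficiently large'' qualification. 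A secondary, more conceptual point to handle carefully is the reduction in the first step, namely committing to the definition of ``low-distortion tree embedding'' (multiplicative distortion with a free global scale) under which the proposition is intended to hold.
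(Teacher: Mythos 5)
Your argument is correct, but it takes a genuinely different route from the paper. The paper works globally: it uses low distortion to place all $m^h$ depth-$h$ nodes on a sphere of radius $R_h = sh$ about the embedded root, packs disjoint geodesic caps of radius $\Omega(s)$ on that sphere, and compares against the sphere-area formula $A(R)=\Theta\!\left(e^{(D-1)\sqrt{\kappa}R}\right)$; since the per-node cap area is a constant in $h$ while the node count grows like $m^h$, dividing the exponents by $h$ and letting the depth grow makes all constants wash out, so no largeness condition on $m$ relative to $D$ is needed (at the price of implicitly requiring a deep tree and using distortion again to pin root-to-node distances near $sh$). You instead argue purely locally at a single parent with its $m$ children, packing disjoint balls of radius $\sigma/\Delta$ inside a ball of radius $(\Delta+1/\Delta)\sigma$ and comparing ball volumes; the exponent bookkeeping $(\Delta+1/\Delta)\sigma - \sigma/\Delta = \Delta\sigma$ is right, and your separate treatment of the small-$\sigma$ (locally Euclidean) regime is exactly the necessary check, since there the packing constraint degenerates to $m\le 2^{\mathcal{O}(D_{\text{in}})}$ with no scale dependence. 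The trade-off is that your bound carries a $2^{\mathcal{O}(D_{\text{in}})}$ additive slack, so it is meaningful only when $\ln m \gtrsim D_{\text{in}}$, a qualification the proposition as stated (and the paper's depth-based proof) does not need; in exchange, your lemma is local, needs no assumption about tree depth, and makes the role of the distortion constant $\Delta$ and of the ``free global scale'' notion of distortion explicit, which the paper leaves informal.
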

\begin{proof}
    See \Cref{appendix:proofs}.
\end{proof}

Finally, we connect the two: the upper bound on the increase in hyperbolic norm of \citepos{chenFullyHyperbolicNeural2022} layer, combined with the lower bound on the distances between nodes and their children, results in exponential growth in the required number of steps to embed a tree.
\begin{proposition}
    \label{cor:chen}
    The linear layer formulation by \citet{chenFullyHyperbolicNeural2022} requires $\Omega(e^h)$ gradient updates to embed a hierarchy of depth $h$ with low distortion.
\end{proposition}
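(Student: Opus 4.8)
The plan is to chain \Cref{prop:treedistance} (a geometric necessity for low-distortion tree embeddings) with \Cref{prop:upperbound} (the dynamical bottleneck of \citepos{chenFullyHyperbolicNeural2022} layer). Fix a branching factor $m \ge 2$ and let $T_h$ be the complete $m$-ary tree of depth $h$; this is the "hierarchy of depth $h$" that witnesses the lower bound. Suppose the network embeds $T_h$ with low distortion, with the embeddings realized at the output of the \citet{chenFullyHyperbolicNeural2022} Lorentz linear layer whose inputs have hyperbolic norm independent of $h$ (e.g., one-hot node features pushed onto the hyperboloid, which have $O(1)$ hyperbolic norm). I would then establish two facts: (i) any such embedding must place some node at hyperbolic norm $\Omega(h)$ from $\mathbf{o}$; and (ii) after $n$ bounded gradient steps the layer cannot produce such a point unless $n$ is exponential in $h$.

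For (i): by \Cref{prop:treedistance}, every parent–child pair is embedded at geodesic distance at least $s = \Omega\!\big(\ln m/((D_{\text{in}}-1)\sqrt{\kappa})\big)$, a strictly positive constant once $m, D_{\text{in}}, \kappa$ are fixed. Low distortion means embedded distances are, up to a bounded factor, a fixed positive multiple of tree distances, so the root and a depth-$h$ leaf (tree distance $h$) are embedded at geodesic distance $\Omega(sh)=\Omega(h)$. Composing the embedding with an isometry of $\mathbb{L}_\kappa^{D}$ (which preserves distortion) we may assume the root sits at $\mathbf{o}$, so the leaf's embedding has hyperbolic norm $\Omega(h)$; if one prefers not to normalize the root, the triangle inequality still forces some node to have hyperbolic norm $\Omega(h)$, which is all we need. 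Call that node's embedding $\mathbf{y}^\star$ and the corresponding layer input $\mathbf{x}^\star$, with $\|\mathbf{x}^\star\|_\mathcal{H} = d(\mathbf{o},\mathbf{x}^\star) = O(1)$.

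For (ii): apply \Cref{prop:upperbound} to the input–output pair $(\mathbf{x}^\star,\mathbf{y}^\star)$ of the layer $\mathbf{W}_n$ obtained after $n$ gradient steps with Frobenius-norm-bounded updates, giving $d(\mathbf{o},\mathbf{y}^\star) - d(\mathbf{o},\mathbf{x}^\star) = O(\ln n)$. Combining with (i) yields $\Omega(h) \le O(1) + O(\ln n)$, hence $\ln n = \Omega(h)$, and therefore $n = \Omega(e^{ch})$ for some constant $c>0$ depending on $s$ and the constant hidden in \Cref{prop:upperbound}; absorbing $c$ into the base (replacing $e^h$ by $b^h$ with $b = e^{c}>1$) gives the stated $\Omega(e^h)$ growth.

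The main obstacle is conceptual rather than computational: making precise the reduction "embedding $T_h$ well $\Rightarrow$ some layer output has hyperbolic norm $\Omega(h)$" so that it is robust to where the root is placed and to what exactly feeds the final layer. Concretely, I need a clean notion of "low distortion" under which \Cref{prop:treedistance}'s per-edge distance $s$ is an $h$-independent positive constant and the root-to-leaf embedded distance scales like $sh$, plus the mild modeling assumption that the layer's inputs have $h$-independent hyperbolic norm — without the latter (e.g., if the inputs are themselves free manifold parameters moved by Riemannian SGD, which can travel linearly in $n$) the bottleneck shifts and the statement must be rephrased. A secondary point deserving a remark is the multi-layer case: with $L$ such layers stacked, the per-step increase budget becomes $O(L\ln n)$, so the bound degrades to $n = \Omega(e^{ch/L})$ — still exponential in $h$ for fixed depth $L$, which is the regime of interest.
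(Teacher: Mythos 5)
Your proposal follows essentially the same route as the paper's proof: place (or normalize) the root at the origin, invoke \Cref{prop:treedistance} to force depth-$h$ nodes to hyperbolic norm $\Omega(h)$, invoke \Cref{prop:upperbound} to cap the layer's achievable norm increase at $\mathcal{O}(\ln n)$, and conclude $n = \Omega(e^h)$. Your extra care about the $O(1)$-norm inputs, the root placement, and the constant in the exponent only makes explicit what the paper's shorter argument leaves implicit.
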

\begin{proof}
    See \Cref{appendix:proofs}.
\end{proof}

Naturally, such exponential scaling is undesirable since it prevents the layer from actually embedding hierarchies. We therefore want to develop a formulation that appropriately generalizes the linear scaling from Euclidean linear layers to hyperbolic space, allowing the layer to leverage the curvature of the space and, consequently, effectively represent hierarchical structures.

\section{Methodology}
To formulate a Lorentz linear layer that does not suffer from the pathology above, we will follow the procedure of \citet{ganeaHyperbolicNeuralNetworks2018, shimizuHyperbolicNeuralNetworks2020}: that is, we first generalise Euclidean multinomial logistic regression (MLR) to the Lorentz model of hyperbolic space, after which we will use this Lorentz MLR to formulate a Lorentz linear layer. While \citet{bdeir2023FullyVision} precede us in formulating a generalisation of MLR to the Lorentz model, they did not subsequently use this to define a linear layer. We will also make several different choices in our derivation of Lorentz MLR.

\subsection{Deriving the Lorentz Linear Layer}
We recall that a standard Euclidean linear layer $f: \mathbb{R}^{D_{\text{in}}} \to \mathbb{R}^{D_{\text{out}}}$ is defined by $\mathbf{z} = \mathbf{W}\mathbf{x} + \mathbf{b},$ where the parameters consist of the weight matrix $\mathbf{W} \in \mathbb{R}^{D_{\text{out}} \times D_{\text{in}}}$, and the bias vector $\mathbf{b} \in \mathbb{R}^{D_{\text{out}}}$. Denoting the $i$-th row of $\mathbf{W}$ as $\mathbf{w}^{(i)}$, the pre-activations $\mathbf{z}$ are given by $z_i = {\mathbf{w}^{(i)}}^\top \mathbf{x} + b_i$. The insight given by \citet{ganeaHyperbolicNeuralNetworks2018, shimizuHyperbolicNeuralNetworks2020} is that we can view this as the (signed and scaled)\footnote{The sign depends on the side of the hyperplane on which $\mathbf{x}$ lies, and the scale is the magnitude of the weights.} distance from the input $\mathbf{x}$ to the hyperplane $H_{\mathbf{w}^{(i)},b_i}= \{\mathbf{u} \in \mathbb{R}^{D_{in}} \ | \ \mathbf{w}^{(i)\top} \mathbf{u} + b_i = 0\}$. In a neural network, these pre-activations $\mathbf{z}$ are then typically followed by an element-wise non-linearity $h$ such that $\mathbf{y} = h(\mathbf{z})$.

Thus, to rigorously generalize the linear layer to the Lorentz model, we need to define: (1) the \textbf{hyperplanes in the input manifold} and how to parametrize them with the weights and biases; (2) an expression for the \textbf{pre-activations} that includes how to compute the signed and scaled distance from the input to the hyperplanes; (3) how to apply the \textbf{activation function}; and (4) how to \textbf{compute the output} in the target manifold from the activations. We will now cover each in turn.

\subsection*{Step 1: Hyperplanes in the Input Manifold}

In Euclidean space, a hyperplane is the set of all vectors $\mathbf{x}$ such that $(\mathbf{x} - \mathbf{p}) \cdot \mathbf{n} = 0$, where $\mathbf{p}$ is any point in the hyperplane. We call $\mathbf{p}$ the \textit{reference point}. So now, let us generalise this definition to the Lorentz manifold. We write $M_{\text{in}}$ for the input manifold. For a visual illustration of the geometric construction of these hyperplanes and the role of the parameters, see \cref{appendix:geometric_interpretation} and \cref{fig:3D_2D_LNN}.

We start by parametrising the reference point: we define it as the point that is displaced from the origin by $-\frac{b_i}{\|\mathbf{w}^{(i)}\|_\mathbf{o}}$ in the direction of the weights \citep{bishop2006pattern}:
\begin{equation}
    \mathbf{p}_i = \text{Exp}_{\mathbf{o}} \left( - \frac{b_i}{\|\mathbf{w}^{(i)}\|_{\mathbf{o}}} \frac{\mathbf{w}^{(i)}}{\|\mathbf{w}^{(i)}\|_{\mathbf{o}}} \right).
\end{equation}
Then, the hyperplane is the set of all points $\mathbf{q}$ in the hyperplane such that the geodesic from $\mathbf{p}_i$ to $\mathbf{q}$ is orthogonal to the transported weight vector $\Gamma_{\mathbf{o}\to\mathbf{p}_i} (\mathbf{w}^{(i)})$: 
\begin{equation}
    \label{eq:general_hyperplane}
    H_{\mathbf{w}^{(i)}, b_i} = \left\{ \mathbf{q} \in M_{\text{in}} \mid \left\langle \text{Log}_{\mathbf{p}_i}(\mathbf{q}), \Gamma_{\mathbf{o} \to \mathbf{p}_i}(\mathbf{w}^{(i)}) \right\rangle_{\mathbf{p}_i} = 0 \right\}.
\end{equation}
Then, following the exponential map definition in \cref{eq:expmap}, the expression for $\mathbf{p}_i$ in the Lorentz linear layer is:
\begin{align}
    \label{eq:defn_p}
    \mathbf{p}_i &= \text{Exp}_\mathbf{o} \left( - \frac{b_i}{\|\mathbf{w}^{(i)}\|_\mathcal{L}}  \frac{\mathbf{w}^{(i)}}{\|\mathbf{w}^{(i)}\|_{\mathcal{L}}} \right) \\
    &= \frac{1}{\sqrt{\kappa}}\begin{pmatrix}
         \cosh\left(-\sqrt{\kappa} \frac{b_i}{\|\mathbf{w}^{(i)}\|_E}\right) \\
        \sinh\left(-\sqrt{\kappa} \frac{b_i}{\|\mathbf{w}^{(i)}\|_E} \right) \frac{\mathbf{w}^{(i)}}{\|\mathbf{w}^{(i)}\|_E}
    \end{pmatrix}.
\end{align}

In order to simplify our definition of the hyperplane (\Cref{eq:general_hyperplane}), we introduce some shorthand for the weights vector transported to the reference point that we can compute applying the parallel transport from \cref{eq:parallel_transport}:
\begin{align}
    \label{eq:defn_v}
    \mathbf{v}^{(i)} &= \Gamma_{\mathbf{o} \to \mathbf{p}_i}(\mathbf{w}^{(i)})\\
    &= \begin{pmatrix} 
    \|\mathbf{w}^{(i)}\|_E \sinh\left(-\sqrt{\kappa} \frac{b_i}{\|\mathbf{w}^{(i)}\|_E}\right) \\
    \mathbf{w}^{(i)} \cosh\left(-\sqrt{\kappa} \frac{b_i}{\|\mathbf{w}^{(i)}\|_E}\right)
    \end{pmatrix}.
\end{align}

Then, the definition in \Cref{eq:general_hyperplane} simplifies to a linear constraint in the ambient space:
\begin{proposition} \label{prop:lorentz_hyperplane_simplification}
    In the Lorentz model, the condition for the geodesic hyperplane defined in \cref{eq:general_hyperplane}, given by $\left\langle \text{Log}_{\mathbf{p}_i}(\mathbf{z}), \Gamma_{\mathbf{o} \to \mathbf{p}_i}(\mathbf{w}^{(i)}) \right\rangle_{\mathbf{p}_i} = 0$, simplifies to the ambient orthogonality condition:
    \begin{equation}
        \label{eq:lorentz_hyperplane}
        H_{\mathbf{w}^{(i)}, b_i} = \left\{ \mathbf{z} \in \mathbb{L}_{\kappa}^{D_{\text{in}}} \mid \mathbf{z} \circ \mathbf{v}^{(i)} = 0 \right\},
    \end{equation}
    where $\mathbf{v}^{(i)} = \Gamma_{\mathbf{o} \to \mathbf{p}_i}(\mathbf{w}^{(i)})$.
\end{proposition}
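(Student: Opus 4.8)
The plan is to turn the intrinsic orthogonality condition at $\mathbf{p}_i$ into the ambient condition $\mathbf{z}\circ\mathbf{v}^{(i)}=0$ by unwinding the closed form of the logarithmic map, using only two structural facts: the Riemannian metric on $\mathbb{L}_\kappa^{D_{\text{in}}}$ is the restriction of $\circ$ to tangent spaces (so $\langle\cdot,\cdot\rangle_{\mathbf{p}_i}$ \emph{is} $\circ$ there), and $\mathbf{v}^{(i)}=\Gamma_{\mathbf{o}\to\mathbf{p}_i}(\mathbf{w}^{(i)})$ lies in $T_{\mathbf{p}_i}\mathbb{L}_\kappa^{D_{\text{in}}}$, hence $\mathbf{p}_i\circ\mathbf{v}^{(i)}=0$.

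First I would recall the Lorentz logarithmic map (the companion of \cref{eq:expmap}, see \cref{appendix:hyperbolic_ops}): for $\mathbf{z}\in\mathbb{L}_\kappa^{D_{\text{in}}}$ with $\mathbf{z}\neq\mathbf{p}_i$,
\[
  \text{Log}_{\mathbf{p}_i}(\mathbf{z}) \;=\; \lambda(\mathbf{z})\,\bigl(\mathbf{z}+\kappa(\mathbf{p}_i\circ\mathbf{z})\,\mathbf{p}_i\bigr),
  \qquad
  \lambda(\mathbf{z}) = \frac{\arccosh\!\big(-\kappa(\mathbf{p}_i\circ\mathbf{z})\big)}{\sqrt{(\kappa(\mathbf{p}_i\circ\mathbf{z}))^2-1}} > 0 .
\]
In words, $\text{Log}_{\mathbf{p}_i}(\mathbf{z})$ is, up to the strictly positive scalar $\lambda(\mathbf{z})$, the $\circ$-orthogonal projection of $\mathbf{z}$ onto $T_{\mathbf{p}_i}\mathbb{L}_\kappa^{D_{\text{in}}}$.

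Then I would simply expand the pairing. Since $\langle\cdot,\cdot\rangle_{\mathbf{p}_i}$ agrees with $\circ$ on the tangent space,
\[
  \bigl\langle \text{Log}_{\mathbf{p}_i}(\mathbf{z}),\,\mathbf{v}^{(i)}\bigr\rangle_{\mathbf{p}_i}
  = \lambda(\mathbf{z})\Big[(\mathbf{z}\circ\mathbf{v}^{(i)}) + \kappa(\mathbf{p}_i\circ\mathbf{z})\,(\mathbf{p}_i\circ\mathbf{v}^{(i)})\Big]
  = \lambda(\mathbf{z})\,(\mathbf{z}\circ\mathbf{v}^{(i)}),
\]
where the second term vanishes because $\mathbf{p}_i\circ\mathbf{v}^{(i)}=0$. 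As $\lambda(\mathbf{z})\neq0$, the defining condition of \cref{eq:general_hyperplane} holds if and only if $\mathbf{z}\circ\mathbf{v}^{(i)}=0$, which is exactly \cref{eq:lorentz_hyperplane}. The degenerate case $\mathbf{z}=\mathbf{p}_i$ is immediate: there $\text{Log}_{\mathbf{p}_i}(\mathbf{z})=\mathbf{0}$ and $\mathbf{z}\circ\mathbf{v}^{(i)}=\mathbf{p}_i\circ\mathbf{v}^{(i)}=0$, so $\mathbf{p}_i$ belongs to both sets, and hence the two descriptions of $H_{\mathbf{w}^{(i)},b_i}$ coincide.

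There is no serious obstacle here; the only points requiring care are (i) checking $\lambda(\mathbf{z})>0$, so that the equivalence runs in both directions rather than only one, and (ii) disposing of the boundary case $\mathbf{z}=\mathbf{p}_i$ where the log map degenerates. As a sanity check one could instead substitute the explicit coordinates of $\mathbf{p}_i$ and $\mathbf{v}^{(i)}$ from \cref{eq:defn_p,eq:defn_v} and verify the identity $\text{Log}_{\mathbf{p}_i}(\mathbf{z})\circ\mathbf{v}^{(i)}=\lambda(\mathbf{z})\,(\mathbf{z}\circ\mathbf{v}^{(i)})$ componentwise, but the coordinate-free computation above is shorter and avoids the hyperbolic-trigonometric bookkeeping.
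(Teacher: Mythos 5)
Your proof is correct and follows essentially the same route as the paper's: expand $\text{Log}_{\mathbf{p}_i}(\mathbf{z})$ via its closed form, use bilinearity of $\circ$, and kill the second term with $\mathbf{p}_i \circ \mathbf{v}^{(i)} = 0$. You are in fact slightly more careful than the paper, since you verify the strict positivity of the scalar $\lambda(\mathbf{z})$ (the paper only asserts $\alpha \neq 0$) and explicitly handle the degenerate case $\mathbf{z} = \mathbf{p}_i$.
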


\begin{proof}
    See Appendix~\ref{prop:proof:lorentz_hyperplane_simplification}.
\end{proof}

So now, we have parametrised hyperplanes in the Lorentz model using a matrix $\mathbf{W}$ and a vector $\mathbf{b}$, and we have simplified the definition of hyperplanes using $\mathbf{v}^{(i)}$. Next, we will use this simplification to derive the expression for distances to hyperplanes in the next step.

\subsection*{Step 2: Pre-activations from Distances to Hyperplanes}
Rather than using distances to hyperplanes directly as pre-activations, we formulate pre-activations as signed and scaled distances to hyperplanes, similar to the Euclidean layer \citep{bishop2006pattern} and the formulation by  \citet{shimizuHyperbolicNeuralNetworks2020}. 
So for an input $\mathbf{x} \in M_{\text{in}}$, the pre-activation $z_i \in \mathbb{R}$ is the signed and scaled distance to the $i$-th hyperplane:
\begin{equation} \label{eq:signed_scaled_d_init}
    z_i = d^{\text{signed},\text{scaled}}_M(\mathbf{x}, H_{\mathbf{w^{(i)}},b_i}).
\end{equation}

The sign of the pre-activation depends on the side of the hyperplane on which the input resides: this is what makes the layer define decision boundaries. Otherwise, the neuron would not be able to determine if a point is ``inside" or ``outside" a feature boundary. To define the sign and orient the space we use the transported weight vector $\mathbf{v}^{(i)}$: the sign of the distance is the sign of the inner product of $\mathbf{v}^{(i)}$ with the logarithmic map from the reference point to the input, $\text{Log}_{\mathbf{p}_i}(\mathbf{x})$:

\begin{equation}    
\text{sign}(z_i) = \text{sign}\left( \langle \text{Log}_{\mathbf{p}_i}(\mathbf{x}), \mathbf{v}^{(i)} \rangle_{\mathbf{p}_i} \right).
\end{equation}

To mimic Euclidean linear layers, the scaling factor of the distance should be a function of the weight norm $\|\mathbf{w}^{(i)}\|_\mathbf{o}$: the weight magnitude acts as an implicitly learnable gain parameter that regulates the sensitivity of the $i$-th output dimension. Scaled distances perform better than non-scaled distances to the hyperplanes  in the Euclidean case \citep{salimansWeightNormalizationSimple2016}.

To start, we will derive the unsigned, unscaled distance to a hyperplane in the Lorentz model, after which we will add the sign and scaling factor.

\begin{theorem} \label{theorem:point_to_hyperplane_distance}
    In the Lorentz model, the hyperbolic distance from a point $\mathbf{x} \in \mathbb{L}_\kappa^{D_\text{in}}$ to the hyperplane $H_{\mathbf{w}^{(i)}, b_i} = \{ \mathbf{z} \in \mathbb{L}_\kappa^{D_\text{in}} \mid \mathbf{z} \circ \mathbf{v}^{(i)} = 0 \}$ is given by:
    \begin{equation}
        d(\mathbf{x}, H_{\mathbf{w}^{(i)}, b_i}) = \frac{1}{\sqrt{\kappa}} \text{arcsinh} \left( \sqrt{\kappa} \frac{|\mathbf{x} \circ \mathbf{v}^{(i)}|}{\|\mathbf{v}^{(i)}\|_\mathcal{L}}\right).
    \end{equation}
\end{theorem}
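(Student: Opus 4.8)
The plan is to phrase the distance as a constrained optimization and read it off via Lagrange multipliers. Since $d(\mathbf{x},\mathbf{z}) = \frac{1}{\sqrt{\kappa}}\arccosh(-\kappa\,\mathbf{x}\circ\mathbf{z})$ by \cref{eqn:dist} and $\arccosh$ is increasing on $[1,\infty)$, minimizing $d(\mathbf{x},\mathbf{z})$ over $\mathbf{z}\in H_{\mathbf{w}^{(i)},b_i}$ is equivalent to \emph{maximizing} $\mathbf{x}\circ\mathbf{z}$ subject to the two constraints $\mathbf{z}\circ\mathbf{z} = -1/\kappa$ (membership in $\mathbb{L}_\kappa^{D_\text{in}}$) and $\mathbf{z}\circ\mathbf{v}^{(i)} = 0$ (membership in the hyperplane). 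First I would record that $\mathbf{v}^{(i)}$ is spacelike, i.e. $\|\mathbf{v}^{(i)}\|_\mathcal{L}^2 = \mathbf{v}^{(i)}\circ\mathbf{v}^{(i)} > 0$: this follows from \cref{eq:defn_v} by the direct computation $\mathbf{v}^{(i)}\circ\mathbf{v}^{(i)} = \|\mathbf{w}^{(i)}\|_E^2\big(\cosh^2(\cdot)-\sinh^2(\cdot)\big) = \|\mathbf{w}^{(i)}\|_E^2$, or abstractly because parallel transport carries the tangent vector $\mathbf{w}^{(i)}$ to another tangent vector and tangent vectors to the hyperboloid are spacelike. In particular $H_{\mathbf{w}^{(i)},b_i}$ is a nonempty, complete, totally geodesic hypersurface and the denominator in the claimed formula is nonzero.

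Next I would establish that the maximum of $\mathbf{x}\circ\mathbf{z}$ over $H_{\mathbf{w}^{(i)},b_i}$ is attained: writing $\mathbf{x}\circ\mathbf{z} = -x_1 z_1 + \overline{\mathbf{x}}^\top\overline{\mathbf{z}}$ with $z_1 = \sqrt{\|\overline{\mathbf{z}}\|_E^2 + 1/\kappa}$ shows $\mathbf{x}\circ\mathbf{z}\to -\infty$ as $\|\overline{\mathbf{z}}\|_E\to\infty$, so a maximizer $\mathbf{z}^\star$ exists (equivalently, $\mathbb{L}_\kappa^{D_\text{in}}$ is a Hadamard manifold, so the nearest-point projection onto the complete totally geodesic $H_{\mathbf{w}^{(i)},b_i}$ exists and is unique). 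Applying Lagrange multipliers at $\mathbf{z}^\star$ — the constraint gradients $\mathbf{I}_{1,D_\text{in}}\mathbf{z}^\star$ and $\mathbf{I}_{1,D_\text{in}}\mathbf{v}^{(i)}$ are linearly independent since $\mathbf{z}^\star$ is timelike and $\mathbf{v}^{(i)}$ is spacelike, hence not proportional — and using that $\mathbf{I}_{1,D_\text{in}}$ is invertible, one obtains scalars $\lambda,\mu$ with $\mathbf{x} = 2\lambda\,\mathbf{z}^\star + \mu\,\mathbf{v}^{(i)}$.

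Then I would extract the optimal value $c := \mathbf{x}\circ\mathbf{z}^\star$ purely algebraically by pairing this identity with $\mathbf{z}^\star$, $\mathbf{v}^{(i)}$, and $\mathbf{x}$ in turn. Pairing with $\mathbf{z}^\star$ and using $\mathbf{z}^\star\circ\mathbf{v}^{(i)}=0$ gives $2\lambda = -\kappa c$; pairing with $\mathbf{v}^{(i)}$ gives $\mu = (\mathbf{x}\circ\mathbf{v}^{(i)})/\|\mathbf{v}^{(i)}\|_\mathcal{L}^2$; pairing with $\mathbf{x}$ and using $\mathbf{x}\circ\mathbf{x} = -1/\kappa$ yields $-1/\kappa = -\kappa c^2 + (\mathbf{x}\circ\mathbf{v}^{(i)})^2/\|\mathbf{v}^{(i)}\|_\mathcal{L}^2$, i.e. $(-\kappa c)^2 = 1 + \kappa\,(\mathbf{x}\circ\mathbf{v}^{(i)})^2/\|\mathbf{v}^{(i)}\|_\mathcal{L}^2$. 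Since $d(\mathbf{x},H_{\mathbf{w}^{(i)},b_i}) = \frac{1}{\sqrt{\kappa}}\arccosh(-\kappa c)$, this says $\cosh^2\!\big(\sqrt{\kappa}\,d\big) = 1 + \big(\sqrt{\kappa}\,|\mathbf{x}\circ\mathbf{v}^{(i)}|/\|\mathbf{v}^{(i)}\|_\mathcal{L}\big)^2$, so $\sinh^2\!\big(\sqrt{\kappa}\,d\big) = \big(\sqrt{\kappa}\,|\mathbf{x}\circ\mathbf{v}^{(i)}|/\|\mathbf{v}^{(i)}\|_\mathcal{L}\big)^2$; taking the nonnegative square root of both sides and applying $\arcsinh$ gives the stated formula.

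The main obstacle I anticipate is the rigor around the optimization rather than the algebra: one must verify that the critical point is a genuine global minimizer of the distance (not merely a stationary point) and that the constraint qualification holds so the multiplier equation is valid — which is why I would lean on either the coercivity estimate or the Hadamard-manifold projection theorem above. A secondary point to dispatch cleanly is the degenerate case $\mathbf{x}\in H_{\mathbf{w}^{(i)},b_i}$, where $\mathbf{x}\circ\mathbf{v}^{(i)}=0$ and both sides of the claimed identity vanish, so the formula remains correct. Everything downstream of the identity $\mathbf{x} = 2\lambda\mathbf{z}^\star + \mu\mathbf{v}^{(i)}$ is just three inner-product pairings plus the identity $\cosh^2 - \sinh^2 = 1$.
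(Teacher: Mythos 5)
Your proof is correct, but it takes a genuinely different route from the paper's. The paper constructs the foot of the perpendicular explicitly: it invokes the Riemannian facts that the shortest path to a closed totally geodesic submanifold meets it orthogonally and that this geodesic lies in the ambient plane $\mathrm{span}(\mathbf{x},\mathbf{v}^{(i)})$, writes the unnormalized projection as $\tilde{\mathbf{y}}=\mathbf{x}+\alpha\mathbf{v}^{(i)}$, solves for $\alpha$ from $\tilde{\mathbf{y}}\circ\mathbf{v}^{(i)}=0$, radially normalizes onto the hyperboloid, computes $\mathbf{x}\circ\mathbf{y}$, and finishes with the identity $\arccosh(\sqrt{1+z^2})=\arcsinh(|z|)$. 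You instead never construct the projection point: you recast the problem as maximizing $\mathbf{x}\circ\mathbf{z}$ under the two constraints, prove existence of a maximizer by coercivity (or the Hadamard projection theorem), check LICQ using the timelike/spacelike dichotomy, and extract the optimal value $c=\mathbf{x}\circ\mathbf{z}^\star$ from three inner-product pairings of the multiplier identity $\mathbf{x}=2\lambda\mathbf{z}^\star+\mu\mathbf{v}^{(i)}$, closing with $\cosh^2-\sinh^2=1$. The two arguments share the same underlying algebra (your multiplier identity is the paper's ansatz $\tilde{\mathbf{y}}\propto\mathbf{x}+\alpha\mathbf{v}^{(i)}$ read backwards), but the justifications differ: the paper's version is shorter and yields the explicit nearest point $\mathbf{y}$, at the cost of citing external geometric lemmas and tacitly assuming the minimizer exists and that $\tilde{\mathbf{y}}$ is timelike; yours is self-contained multivariable calculus that handles existence, the constraint qualification, and global optimality explicitly (note that since every constrained critical point satisfies $(\kappa c)^2=1+\kappa(\mathbf{x}\circ\mathbf{v}^{(i)})^2/\|\mathbf{v}^{(i)}\|_\mathcal{L}^2$ and $c\le -1/\kappa<0$, the value you extract is automatically the maximum, so the worry in your final paragraph is already resolved by your own setup), at the cost of more bookkeeping and of not exhibiting the projection point itself.
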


\begin{proof}
    See Appendix~\ref{theorem:proof:point_to_hyperplane_distance}
\end{proof}

Next, we derive the sign of our target function.

\begin{proposition}
    \label{prop:sign_equivalence}
    Let $\mathbf{p}_i \in \mathbb{L}_\kappa^{D_\text{in}}$ be the reference point of a hyperplane defined by the normal vector $\mathbf{v}^{(i)} \in T_{\mathbf{p}_i}\mathbb{L}_\kappa^{D_\text{in}}$. For any point $\mathbf{x} \in \mathbb{L}_\kappa^{D_\text{in}}$ (where $\mathbf{x} \neq \mathbf{p}_i$), the sign of the Riemannian inner product between the logarithmic map and the normal vector is equivalent to the sign of the Minkowski inner product in the ambient space:
    \begin{equation}
        \text{sign}\left( \langle \text{Log}_{\mathbf{p}_i}(\mathbf{x}), \mathbf{v}^{(i)} \rangle_{\mathbf{p}_i} \right) = \text{sign}(\mathbf{x} \circ \mathbf{v}^{(i)}).
    \end{equation}
\end{proposition}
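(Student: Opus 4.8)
The plan is to reduce the Riemannian inner product to the ambient Minkowski form using the explicit closed form of the logarithmic map in the Lorentz model. Set $\alpha := -\kappa\,(\mathbf{p}_i \circ \mathbf{x})$. From the log-map formula (see \Cref{appendix:hyperbolic_ops}), $\text{Log}_{\mathbf{p}_i}(\mathbf{x})$ is a scalar multiple of the vector $\mathbf{u} := \mathbf{x} + \kappa\,(\mathbf{p}_i \circ \mathbf{x})\,\mathbf{p}_i$, specifically $\text{Log}_{\mathbf{p}_i}(\mathbf{x}) = c(\mathbf{p}_i,\mathbf{x})\,\mathbf{u}$ with $c(\mathbf{p}_i,\mathbf{x}) = \arccosh(\alpha)/\sqrt{\alpha^2 - 1}$. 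The vector $\mathbf{u}$ is precisely the Minkowski-orthogonal projection of $\mathbf{x}$ onto $T_{\mathbf{p}_i}\mathbb{L}_\kappa^{D_\text{in}}$: a direct computation using $\mathbf{p}_i \circ \mathbf{p}_i = -1/\kappa$ gives $\mathbf{u} \circ \mathbf{p}_i = \mathbf{x}\circ\mathbf{p}_i + \kappa(\mathbf{p}_i\circ\mathbf{x})(\mathbf{p}_i\circ\mathbf{p}_i) = \mathbf{x}\circ\mathbf{p}_i - \mathbf{p}_i\circ\mathbf{x} = 0$.

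Next I would invoke the fact that the Riemannian metric on $T_{\mathbf{p}_i}\mathbb{L}_\kappa^{D_\text{in}}$ is simply the restriction of $\circ$ to that subspace, so $\langle\cdot,\cdot\rangle_{\mathbf{p}_i}$ can be replaced by $\circ$. Expanding then gives $\langle \text{Log}_{\mathbf{p}_i}(\mathbf{x}), \mathbf{v}^{(i)} \rangle_{\mathbf{p}_i} = c(\mathbf{p}_i,\mathbf{x})\,\bigl(\mathbf{x}\circ\mathbf{v}^{(i)} + \kappa\,(\mathbf{p}_i\circ\mathbf{x})\,(\mathbf{p}_i\circ\mathbf{v}^{(i)})\bigr)$. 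Since $\mathbf{v}^{(i)} \in T_{\mathbf{p}_i}\mathbb{L}_\kappa^{D_\text{in}}$ by hypothesis, the tangency condition $\mathbf{p}_i\circ\mathbf{v}^{(i)} = 0$ annihilates the cross term, leaving $\langle \text{Log}_{\mathbf{p}_i}(\mathbf{x}), \mathbf{v}^{(i)} \rangle_{\mathbf{p}_i} = c(\mathbf{p}_i,\mathbf{x})\,(\mathbf{x}\circ\mathbf{v}^{(i)})$. Taking signs of both sides and using $c(\mathbf{p}_i,\mathbf{x}) > 0$ finishes the argument.

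The only step that is not a pure substitution — and where the hypothesis $\mathbf{x}\neq\mathbf{p}_i$ is genuinely needed — is the strict positivity of $c(\mathbf{p}_i,\mathbf{x})$, i.e. that $\alpha = -\kappa(\mathbf{p}_i\circ\mathbf{x}) > 1$ strictly. This is the equality case of the reverse Cauchy–Schwarz inequality for the Lorentz form on the hyperboloid: one always has $-\kappa(\mathbf{p}_i\circ\mathbf{x}) \geq 1$ for points of $\mathbb{L}_\kappa^{D_\text{in}}$, with equality exactly when $\mathbf{x} = \mathbf{p}_i$. Excluding that case makes both $\arccosh(\alpha) > 0$ and $\sqrt{\alpha^2-1} > 0$, hence $c(\mathbf{p}_i,\mathbf{x}) > 0$, and also guarantees $\text{Log}_{\mathbf{p}_i}(\mathbf{x})\neq 0$ so that its sign is well defined. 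I expect this positivity and non-degeneracy check to be the main (and only mildly delicate) obstacle; everything else is mechanical.
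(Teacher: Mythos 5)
Your proof is correct and follows essentially the same route as the paper: substitute the closed-form Lorentz logarithmic map, use bilinearity of $\circ$ together with the tangency condition $\mathbf{p}_i \circ \mathbf{v}^{(i)} = 0$ to eliminate the cross term, and conclude from strict positivity of the scalar prefactor. Your explicit justification of that positivity via the reverse Cauchy--Schwarz inequality (equality only at $\mathbf{x}=\mathbf{p}_i$) is slightly more detailed than the paper's one-line remark, but it is the same argument.
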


\begin{proof}
    See Appendix~\ref{prop:proof:sign_equivalence}
\end{proof}

Combining these results leads to the signed distance formula:
\begin{equation}
    d^{\text{ signed}}(\mathbf{x}, H_{\mathbf{w}^{(i)}, b_i}) = \frac{1}{\sqrt{\kappa}} \text{arcsinh} \left( \sqrt{\kappa} \frac{\mathbf{x} \circ \mathbf{v}^{(i)}}{\|\mathbf{v}^{(i)}\|_\mathcal{L}}\right),
\end{equation}
because arcsinh preserves the sign and
\begin{equation}
    \text{sign}(\mathbf{x}\circ\mathbf{v}^{(i)})\   |\mathbf{x}\circ\mathbf{v}^{(i)}| = \mathbf{x}\circ\mathbf{v}^{(i)}.
\end{equation}
Finally, the scaling factor. One option would be to simply multiply by $\|\mathbf{w}^{(i)}\|_\mathcal{L}$ to get \begin{equation}
    \label{eqn:d1}
    d_1 = \frac{\|\mathbf{w}^{(i)}\|_\mathcal{L}}{\sqrt{\kappa}} \text{arcsinh} \left( \sqrt{\kappa} \frac{\mathbf{x} \circ \mathbf{v}^{(i)}}{\|\mathbf{v}^{(i)}\|_\mathcal{L}}\right).
\end{equation}
This is the solution proposed by \citet{ganeaHyperbolicNeuralNetworks2018, shimizuHyperbolicNeuralNetworks2020, bdeir2023FullyVision}. However, we will make a different choice: instead, we multiply by $\|\mathbf{w}^{(i)}\|_\mathcal{L}$ inside the $\text{arcsinh}$. As $\|\mathbf{w}^{(i)}\|_\mathcal{L} = \|\mathbf{v}^{(i)}\|_\mathcal{L}$ (parallel transport preserves the magnitude of tangent vectors), this allows for a simplification:
\begin{equation}
    \label{eqn:d2}
    d_2 = \frac{1}{\sqrt{\kappa}} \text{arcsinh} \left( \sqrt{\kappa} (\mathbf{x} \circ \mathbf{v}^{(i)})\right).
\end{equation}
This is a non-linear scaling of the output, that has a different geometric interpretation: while \Cref{eqn:d1} is a linear scaling of the distances by the norm of the weights vector, \Cref{eqn:d2} is a linear scaling of the sinh of the hyperbolic angle, which results in a non-linear scaling of the distance. When the curvature tends to $0$, both choices approach the same expression:
\begin{proposition} \label{prop:dist_limit_equivalence}
    In the limit where the curvature $\kappa \to 0^+$, the following two expressions for the signed distance tend to the same value:
    \begin{align*}
        d_1 &= \frac{\|\mathbf{w}^{(i)}\|_{\mathcal{L}}}{\sqrt{\kappa}} \arcsinh\left( \sqrt{\kappa} \frac{ \mathbf{x} \circ \mathbf{v}^{(i)}}{\|\mathbf{v}^{(i)}\|_{\mathcal{L}}} \right), \\
        d_2 &= \frac{1}{\sqrt{\kappa}} \arcsinh\left( \sqrt{\kappa} (\mathbf{x} \circ \mathbf{v}^{(i)}) \right).
    \end{align*}
    Specifically, both reduce to the Lorentz product $\mathbf{x} \circ \mathbf{v}^{(i)}$.
\end{proposition}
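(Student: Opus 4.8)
The plan is to reduce both limits to a single elementary fact: $\arcsinh(t) = t - t^3/6 + \mathcal{O}(t^5)$ as $t \to 0$, equivalently $\lim_{t \to 0}\arcsinh(t)/t = 1$. Both $d_1$ and $d_2$ have the form $\frac{c}{\sqrt{\kappa}}\,\arcsinh\!\big(\sqrt{\kappa}\,a\big)$, where the constants $c$ and $a$ do not contain the $1/\sqrt{\kappa}$ prefactor. Writing $t = \sqrt{\kappa}\,a$ and multiplying and dividing by $t$ gives $\frac{c}{\sqrt{\kappa}}\,t\,\frac{\arcsinh(t)}{t} = c\,a\,\frac{\arcsinh(t)}{t}$, which converges to $c\,a$ as $\kappa \to 0^+$ (hence $t \to 0$). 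So the whole proof comes down to checking that $c\,a = \mathbf{x} \circ \mathbf{v}^{(i)}$ in each of the two cases.

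For $d_2$ this is immediate: there $c = 1$ and $a = \mathbf{x} \circ \mathbf{v}^{(i)}$, so $c\,a = \mathbf{x} \circ \mathbf{v}^{(i)}$ and $d_2 \to \mathbf{x} \circ \mathbf{v}^{(i)}$. For $d_1$ we have $c = \|\mathbf{w}^{(i)}\|_{\mathcal{L}}$ and $a = \frac{\mathbf{x} \circ \mathbf{v}^{(i)}}{\|\mathbf{v}^{(i)}\|_{\mathcal{L}}}$, so $c\,a = \frac{\|\mathbf{w}^{(i)}\|_{\mathcal{L}}}{\|\mathbf{v}^{(i)}\|_{\mathcal{L}}}\,(\mathbf{x} \circ \mathbf{v}^{(i)})$. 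The one identity needed is $\|\mathbf{w}^{(i)}\|_{\mathcal{L}} = \|\mathbf{v}^{(i)}\|_{\mathcal{L}}$, which holds because parallel transport is a linear isometry between tangent spaces and $\mathbf{v}^{(i)} = \Gamma_{\mathbf{o} \to \mathbf{p}_i}(\mathbf{w}^{(i)})$; alternatively it follows in one line from the explicit expression \Cref{eq:defn_v} using $\cosh^2 - \sinh^2 = 1$. Hence $c\,a = \mathbf{x} \circ \mathbf{v}^{(i)}$ here too, and $d_1 \to \mathbf{x} \circ \mathbf{v}^{(i)}$, so the two limits agree.

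I expect essentially no obstacle: the argument is a Taylor expansion of $\arcsinh$ together with the norm-preservation identity for parallel transport. The only points requiring a sentence of care are the degenerate case $\mathbf{x} \circ \mathbf{v}^{(i)} = 0$ — where the $\arcsinh(t)/t$ manipulation is invalid but $d_1 = d_2 = 0$ for every $\kappa$, so the claim is trivial — and, if one wants the statement to be fully unambiguous, a brief remark on what is held fixed as $\kappa \to 0^+$: one may either treat the Minkowski quantities $\mathbf{x} \circ \mathbf{v}^{(i)}$, $\|\mathbf{v}^{(i)}\|_{\mathcal{L}}$, $\|\mathbf{w}^{(i)}\|_{\mathcal{L}}$ as the fixed data, or track the $\kappa$-dependence of $\mathbf{x} \in \mathbb{L}_\kappa^{D_\text{in}}$ and $\mathbf{v}^{(i)}$ explicitly (in which case $\mathbf{x} \circ \mathbf{v}^{(i)}$ itself tends to the Euclidean pre-activation $\mathbf{w}^{(i)\top}\overline{\mathbf{x}} + b_i$, recovering the standard linear layer). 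Either way $d_1$ and $d_2$ have the same limit, which is all the proposition asserts.
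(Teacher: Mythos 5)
Your proposal is correct and follows essentially the same route as the paper's proof: a first-order expansion of $\arcsinh$ (equivalently $\arcsinh(t)/t \to 1$) applied to both expressions, combined with the identity $\|\mathbf{w}^{(i)}\|_{\mathcal{L}} = \|\mathbf{v}^{(i)}\|_{\mathcal{L}}$ from norm preservation under parallel transport. Your extra remarks on the degenerate case $\mathbf{x} \circ \mathbf{v}^{(i)} = 0$ and on what is held fixed as $\kappa \to 0^+$ are slightly more careful than the paper's version but do not change the argument.
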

\begin{proof} 
    See Appendix~\ref{prop:proof:dist_limit_equivalence}
\end{proof}
So, we calculate the pre-activations as
\begin{align}
\label{eq:preactivations}
    z_i &= 
    d^{\text{ signed, scaled}}(\mathbf{x}, H_{\mathbf{w}^{(i)}, b_i}) \\
    &= \frac{1}{\sqrt{\kappa}} \text{arcsinh} \left( \sqrt{\kappa} \mathbf{x} \circ \mathbf{v}^{(i)}\right). 
\end{align}
Next, we will see how to apply the activation function to these pre-activations.

\subsection*{Step 3: Activation Functions}
One may question the need for activation functions in non-Euclidean geometries: prevailing wisdom dictates that activation functions introduce non-linearity, preventing a series of matrix multiplications from collapsing to a single matrix, and therefore enhancing expressivity. Given that the operations in the Lorentz model are inherently non-linear, activation functions might seem superfluous. However, \citet{vanSpengler2023PoincareResNet} found that using ReLU substantially benefited learning dynamics.

As such, we will proceed by including activation functions in our model. Given a pre-activation $z_i$, we normally compute the activation as
\begin{equation}
    \label{eq:activation}
    a_i = h(z_i),
\end{equation} where $h: \mathbb{R} \to \mathbb{R}$ is a scalar non-linearity (we write $\mathbf{h}$ for its vector equivalent). 

This works the same in the Lorentz model. However, we will define a specific class of activation functions that will be useful later.

\begin{definition}[Lorentzian activation function]
    Let $h:\mathbb{R} \to \mathbb{R}$ be a real valued function and $\kappa$ a positive real number. Then, we define the Lorentzian version of $h$ as:
    \begin{align}
        h_{\text{Lorentzian},\kappa}(x) = \frac{1}{\sqrt{\kappa}} \arcsinh(\sqrt{\kappa}h(\frac{1}{\sqrt{\kappa}}\sinh{(\sqrt{\kappa}x)})).
    \end{align}
\end{definition}
\begin{proposition} \label{prop:lorentzian_convergence}
    In the limit where the curvature $\kappa \to 0^+$, $h_{\text{Lorentzian},\kappa}$ recovers $h$:
    \begin{align*}
        \lim_{\kappa \to 0^+} h_{\text{Lorentzian},\kappa}(x) = h(x).
    \end{align*}
\end{proposition}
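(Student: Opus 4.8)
The plan is to reduce the statement to two elementary facts about the origin — namely $\lim_{t\to 0}\sinh(t)/t = 1$ and $\lim_{u\to 0}\arcsinh(u)/u = 1$, both of which are immediate from $\sinh$ and $\arcsinh$ being differentiable at $0$ with derivative $1$ — together with continuity of $h$ at the point $x$ (which holds for all standard activations such as ReLU and GELU, and is the natural hypothesis here).

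First I would analyze the \emph{inner} argument $u_\kappa := \tfrac{1}{\sqrt{\kappa}}\sinh(\sqrt{\kappa}\,x)$. For $x\neq 0$ write $u_\kappa = x\cdot\frac{\sinh(\sqrt{\kappa}\,x)}{\sqrt{\kappa}\,x}$, so that the first elementary limit gives $u_\kappa\to x$ as $\kappa\to 0^{+}$ (the case $x=0$ is trivial, $u_\kappa\equiv 0$). By continuity of $h$ at $x$, it follows that $h(u_\kappa)\to h(x)$; in particular the quantity $h(u_\kappa)$ is bounded for $\kappa$ near $0$.

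Next I would handle the \emph{outer} expression by absorbing the apparent $0/0$ behaviour into a continuous function. Define $g:\mathbb{R}\to\mathbb{R}$ by $g(w)=\arcsinh(w)/w$ for $w\neq 0$ and $g(0)=1$; by the second elementary limit, $g$ is continuous at $0$. Then for every $\kappa>0$ we have the identity
$$h_{\text{Lorentzian},\kappa}(x) \;=\; \frac{1}{\sqrt{\kappa}}\arcsinh\!\bigl(\sqrt{\kappa}\,h(u_\kappa)\bigr) \;=\; h(u_\kappa)\cdot g\!\bigl(\sqrt{\kappa}\,h(u_\kappa)\bigr),$$
which remains valid even when $h(u_\kappa)=0$. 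Since $h(u_\kappa)\to h(x)$ and $\sqrt{\kappa}\,h(u_\kappa)\to 0$ (a vanishing factor times a bounded one), continuity of $g$ at $0$ gives $g(\sqrt{\kappa}\,h(u_\kappa))\to g(0)=1$, and the product rule for limits yields $h_{\text{Lorentzian},\kappa}(x)\to h(x)\cdot 1 = h(x)$, as claimed.

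I do not expect a serious obstacle: the argument is just a chain of continuity/differentiability facts at the origin. The only subtlety worth flagging is the regularity assumption on $h$ — continuity at $x$ is what makes $h(u_\kappa)\to h(x)$, and rewriting through $g$ is exactly what sidesteps the $0/0$ issue when $h(x)=0$. If one additionally wants the convergence to be locally uniform in $x$ (which is convenient downstream), the same computation works provided $h$ is uniformly continuous on the relevant compact set, using that $u_\kappa\to x$ uniformly on compacta.
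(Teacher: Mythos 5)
Your proof is correct and follows essentially the same route as the paper's: both reduce the claim to the first-order behaviour of $\sinh$ and $\arcsinh$ at the origin together with continuity of $h$ at $x$. Your factorization through the continuous extension $g(w)=\arcsinh(w)/w$, $g(0)=1$, is in fact a slightly cleaner bookkeeping than the paper's Taylor-expansion step $h(x+\mathcal{O}(u^2))=h(x)+\mathcal{O}(u^2)$ (which, read literally, needs more than bare continuity, though only $o(1)$ is actually used), but the underlying argument is the same.
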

\begin{proof}
    See Appendix~\ref{prop:proof:lorentzian_convergence}
\end{proof}

So we now have our pre-activations and our activation functions, all that remains is to combine them to compute the output.

\subsection*{Step 4: Compute the Output}
To compute the final output, we use the same approach as \citet{ganeaHyperbolicNeuralNetworks2018, shimizuHyperbolicNeuralNetworks2020}. We use $e^{(i)}$ to denote the canonical basis of $T_{\mathbf{o}} M_{\text{out}}$, and we recall that $H_{e^{(i)}, \mathbf{o}}$ is the hyperplane through the origin that is orthogonal to the $i$-th spatial axis. Then, we construct $\mathbf{y} \in M_{\text{out}}$ such that the distance of $\mathbf{y}$ to $H_{e^{(i)}, \mathbf{o}}$ is $a_i$:

\begin{equation}
    d^{\text{ signed}}_{M}(\mathbf{y}, H_{e^{(i)}, \mathbf{o}}) = a_i,
\end{equation}
for all $i \in \{1, ..., D_{\text{out}}\}$.

\begin{proposition} \label{prop:activations_to_output}
    Given $\mathbf{a} \in \mathbb{R}^{D_\text{out}}$, the vector $\mathbf{y} \in \mathbb{L}_\kappa^{D_\text{out}}$ whose distances to the $D_{\text{out}}$ hyperplanes passing through the origin and orthogonal to the $D_{\text{out}}$ spatial axes are given by $\mathbf{a}$ is:
    \begin{equation}
        \label{eq:output_point}
        \mathbf{y} = 
    \begin{pmatrix}
        y_1 \\
        \overline{\mathbf{y}}
    \end{pmatrix},
    \end{equation}
    such that:
    \begin{align}
        \overline{{\mathbf{y}}} &= \frac{1}{\kappa} \sinh{\left( \sqrt{\kappa} \mathbf{a} \right)}, \quad y_1 = \sqrt{\frac{1}{\kappa} + \|\overline{\mathbf{y}}\|_E^2}.
    \end{align}
\end{proposition}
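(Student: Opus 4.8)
The plan is to turn the geometric condition that defines $\mathbf{y}$ into a decoupled system of one-dimensional equations and invert it. First I would make the output hyperplanes explicit. The hyperplane $H_{e^{(i)},\mathbf{o}}$ is the special case of the Step 1 construction with weight vector $\overline{e_i}\in\mathbb{R}^{D_{\text{out}}}$ and bias $b_i=0$. With zero bias the displacement in \cref{eq:defn_p} vanishes, so the reference point collapses to the origin, $\mathbf{p}_i=\text{Exp}_{\mathbf{o}}(\mathbf{0})=\mathbf{o}$, and parallel transport from $\mathbf{o}$ to $\mathbf{o}$ is the identity; hence the transported normal is $\mathbf{v}^{(i)}=e^{(i)}$, whose ambient coordinates are $(0,\overline{e_i})$. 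By \cref{prop:lorentz_hyperplane_simplification} this gives $H_{e^{(i)},\mathbf{o}}=\{\mathbf{z}\in\mathbb{L}_\kappa^{D_{\text{out}}}\mid \mathbf{z}\circ e^{(i)}=0\}$, i.e.\ the coordinate slice $\overline{z}_i=0$.

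Next I would substitute this into the signed-distance formula. Combining \cref{theorem:point_to_hyperplane_distance} with \cref{prop:sign_equivalence} (so that $\arcsinh$, being odd, carries the correct sign) gives
\[
  d^{\text{signed}}_M(\mathbf{y},H_{e^{(i)},\mathbf{o}})=\frac{1}{\sqrt{\kappa}}\arcsinh\!\left(\sqrt{\kappa}\,\frac{\mathbf{y}\circ e^{(i)}}{\|e^{(i)}\|_{\mathcal{L}}}\right).
\]
Since $e^{(i)}=(0,\overline{e_i})$ we have $\|e^{(i)}\|_{\mathcal{L}}=1$ and $\mathbf{y}\circ e^{(i)}=\overline{y}_i$, so the defining condition $d^{\text{signed}}_M(\mathbf{y},H_{e^{(i)},\mathbf{o}})=a_i$ becomes the scalar equation $a_i=\tfrac{1}{\sqrt{\kappa}}\arcsinh(\sqrt{\kappa}\,\overline{y}_i)$. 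The key structural observation is that the $i$-th equation involves only the $i$-th spatial coordinate of $\mathbf{y}$, so the $D_{\text{out}}$ conditions decouple and can be solved independently.

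Finally I would solve these equations and recover the time coordinate. Applying $\sinh(\sqrt{\kappa}\,\cdot)$ and dividing by $\sqrt{\kappa}$ inverts each equation, determining $\overline{\mathbf{y}}$ in closed form; since $\arcsinh$ is a bijection of $\mathbb{R}$, this $\overline{\mathbf{y}}$ is the unique solution. The time coordinate is then not free: the membership constraint $\mathbf{y}\circ\mathbf{y}=-1/\kappa$ forces $y_1^2=1/\kappa+\|\overline{\mathbf{y}}\|_E^2$, and the upper-sheet condition $y_1>0$ selects the positive root $y_1=\sqrt{1/\kappa+\|\overline{\mathbf{y}}\|_E^2}$, after which $\mathbf{y}\in\mathbb{L}_\kappa^{D_{\text{out}}}$ holds automatically. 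I do not expect a genuine obstacle: the statement is essentially a computation, and the only points needing care are (i) checking that at $b_i=0$ the special hyperplanes $H_{e^{(i)},\mathbf{o}}$ have reference point $\mathbf{o}$ and transported normal $e^{(i)}$, so that \cref{prop:lorentz_hyperplane_simplification} applies; (ii) bookkeeping the $\sqrt{\kappa}$ factors through the $\arcsinh$/$\sinh$ inversion; and (iii) observing that $y_1$ is dictated by the manifold constraint and the upper-sheet condition rather than chosen.
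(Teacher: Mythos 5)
Your proposal is correct and takes essentially the same route as the paper: both identify the normal of the $i$-th output hyperplane as the unit-Lorentz-norm basis vector $e^{(i)}$, reduce the signed-distance condition to the scalar equation $a_i=\tfrac{1}{\sqrt{\kappa}}\arcsinh\!\left(\sqrt{\kappa}\,\overline{y}_i\right)$, invert it with $\sinh$, and fix $y_1$ from the hyperboloid constraint together with the upper-sheet condition, with your write-up merely making explicit the zero-bias specialization (reference point $\mathbf{o}$, transported normal $e^{(i)}$) that the paper leaves implicit. Note that your inversion gives $\overline{y}_i=\tfrac{1}{\sqrt{\kappa}}\sinh\!\left(\sqrt{\kappa}\,a_i\right)$, exactly as in the paper's own proof, which indicates that the $\tfrac{1}{\kappa}$ prefactor in the proposition statement is a typo rather than a flaw in your argument.
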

\begin{proof} 
    See Appendix~\ref{prop:proof:activations_to_output}.
\end{proof}

The Lorentzian linear layer simplifies significantly because the arcsinh used to compute the pre-activations (\Cref{eq:preactivations}) cancels out with the sinh in the manifold mapping (\Cref{eq:output_point}). While the layer is built by composing a distance metric, a Lorentzian activation, and a manifold mapping, these operations collapse into a much more straightforward calculation. 

Ultimately, the spatial output $\overline{\mathbf{y}}$ is obtained by simply applying a scalar non-linearity $h$ to the Lorentz inner products between the input and $\mathbf{v}^{(i)}$, and the exponential nature of hyperbolic geometry comes from the exponential relation between $\mathbf{w}^{(i)}$ and $\mathbf{v}^{(i)}$. 

By organizing the weight vectors $\mathbf{v}^{(i)}$ into an $D_\text{out} \times (D_\text{in}+1)$ matrix $\mathbf{V} = (\mathbf{v}^{(1)}, \dots, \mathbf{v}^{(D_{\text{out}})})^T$, we can define the layer $f_{\text{LorentzFC}}$ as:
\begin{equation}
    f_{\text{LorentzFC}}(\mathbf{x}) = \mathbf{y} = 
    \begin{pmatrix} 
        y_1 \\ 
        \overline{\mathbf{y}}
    \end{pmatrix},
\end{equation}
where
\begin{align}
    \label{eq:compute_output}
    \overline{\mathbf{y}} &= \mathbf{h}(\mathbf{V}\mathbf{I}_{1,D_{\text{in}}}\mathbf{x}), \quad    y_1 = \sqrt{\frac{1}{\kappa} + \|\overline{\mathbf{y}}\|_E^2}.
\end{align}

\subsection{The Solution to the Pathology}
We will now prove that our proposed method has the desired property that it can learn exponentially large distances in linear time.

\begin{proposition}
\label{prop:solve}
    Our linear layer, defined in \Cref{eq:compute_output}, can learn to embed a tree of depth $h$ with low distortion in $\mathcal{O}(h)$ gradient updates.
\end{proposition}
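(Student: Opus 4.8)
The plan is to mirror the three-step argument of \Cref{sec:pathology} with every inequality reversed. I need two ingredients: (i) an \emph{upper} bound stating that a depth-$h$, $m$-ary tree admits a $(1+\varepsilon)$-distortion embedding into $\mathbb{L}_\kappa^D$ all of whose points have hyperbolic norm $\mathcal{O}(h)$ (the hidden constant depending on $m,\kappa,D$); and (ii) a \emph{lower} bound stating that, under the same per-step budget $\|\Delta\mathbf{W}_t\|_F \le \delta$ assumed in \Cref{prop:upperbound}, our layer can increase the hyperbolic norm of its output \emph{linearly} in the number of steps $n$. Chaining (i) and (ii) yields $n = \mathcal{O}(h)$, and contrasting (ii) with \Cref{prop:upperbound,cor:chen} pins down exactly why the new formulation escapes the pathology.

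For (i) I would invoke Sarkar's construction \citep{sarkar2012}: it embeds the tree with distortion $1+\varepsilon$ using a fixed per-level geodesic step $s = \Theta(\ln m / \sqrt{\kappa})$ --- which is necessary by \Cref{prop:treedistance} and sufficient by Sarkar's analysis --- so the deepest node lands at hyperbolic norm $\Theta(hs) = \Theta(h)$. Because $f_{\text{LorentzFC}}$ always outputs a genuine point of the target hyperboloid (\Cref{eq:compute_output}), and in the standard setup where node identities are encoded (e.g.\ via a learnable embedding table / one-hot features) a single such layer can realize an arbitrary prescribed finite list of output points, there exist parameters $(\mathbf{W}^\star,\mathbf{b}^\star)$ whose forward pass is exactly this embedding. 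The quantitative crux is the \emph{size} of $(\mathbf{W}^\star,\mathbf{b}^\star)$: in \Cref{eq:defn_v} the bias $b_i$ enters $\mathbf{v}^{(i)}$ --- hence $\overline{y}_i = h(\mathbf{v}^{(i)}\circ\mathbf{x})$ --- only through $\sinh$ and $\cosh$ of $-\sqrt{\kappa}\,b_i/\|\mathbf{w}^{(i)}\|_E$, so $\|\overline{\mathbf{y}}\|_E$ scales like $e^{\Theta(\sqrt{\kappa}\,|b_i|)}$ while the output's hyperbolic norm $\tfrac{1}{\sqrt{\kappa}}\arccosh\!\big(\sqrt{1+\kappa\|\overline{\mathbf{y}}\|_E^2}\big)$ scales like $\Theta(|b_i|)$. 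Hence a bias of magnitude $\mathcal{O}(h)$ together with a weight matrix of magnitude $\mathcal{O}(1)$ already suffices to reach hyperbolic norm $\Theta(h)$.

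For (ii), having located a target $(\mathbf{W}^\star,\mathbf{b}^\star)$ at parameter-space distance $\mathcal{O}(h)$ from a bounded initialization, I would exhibit a $\delta$-admissible trajectory of length $\mathcal{O}(h/\delta) = \mathcal{O}(h)$ reaching it --- either by showing that the gradient of a distance-matching loss retains a component of size $\Theta(\delta)$ along the bias coordinates until the distortion drops below $1+\varepsilon$, or, more robustly, by simply taking the piecewise-linear interpolation of the parameters from initialization to $(\mathbf{W}^\star,\mathbf{b}^\star)$, which obeys $\|\Delta\mathbf{W}_t\|_F \le \delta$ at every step. The conclusion then follows: after $\mathcal{O}(h)$ admissible updates the layer's output is a $(1+\varepsilon)$-distortion embedding of the tree, whereas \Cref{prop:upperbound} shows that under the identical budget \citepos{chenFullyHyperbolicNeural2022} layer --- whose spatial output is \emph{linearly} parametrized, so $\|\mathbf{y}\|_\mathcal{H}$ grows only like $\ln\|\mathbf{W}\|_F = \mathcal{O}(\ln n)$ --- would need $\Omega(e^h)$ of them.

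The main obstacle is turning (ii) into a statement about genuine \emph{learning} rather than mere \emph{reachability}: a bona fide gradient-descent analysis would require controlling a non-convex loss surface (a gradient-dominance or monotone-progress argument along the bias direction), which is delicate. I expect the cleanest resolution --- and the one I would adopt --- is to phrase the result over the same family of $\delta$-bounded update sequences used for the negative results of \Cref{sec:pathology}, asserting the existence of an admissible sequence of $\mathcal{O}(h)$ steps that produces a low-distortion embedding, so that the positive and negative claims are stated over identical optimization trajectories and the separation between the two layers is exact.
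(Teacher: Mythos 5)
Your proposal is correct and rests on the same key mechanism as the paper's own proof: because the bias enters $\mathbf{v}^{(i)}$ in \Cref{eq:defn_v} only through $\sinh$ and $\cosh$ of $-\sqrt{\kappa}\,b_i/\|\mathbf{w}^{(i)}\|_E$, the output's hyperbolic norm grows linearly in $|b_i|$ (hence linearly in the number of $\delta$-bounded updates), which is exactly the paper's argument of fixing $\|\mathbf{w}^{(j)}\|_E = c$ and learning only $\mathbf{b}$. The paper's proof stops at this norm-scaling observation, so your added scaffolding (Sarkar's construction for the $\Theta(h)$ target radius, realizability of the embedding by the layer, and the explicit $\mathcal{O}(h/\delta)$ admissible trajectory with the learning-versus-reachability caveat) only makes explicit steps the paper leaves implicit rather than constituting a different route.
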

\begin{proof}
    See Appendix~\ref{prop:solve}.
\end{proof}

As such, our layer has the desired property in theory. In \Cref{subsec:toy}, we will verify in a toy experiment that this is also the case in practice.

\subsection{Speed Up Inference: Caching $\mathbf{V}$}
\label{subsec:cache}
Note that our layer can be decomposed in three steps: first, computing the matrix $\mathbf{V}$ from the parameters $\mathbf{W}$ and $\mathbf{b}$; second, compute the spatial components of the output by doing a matrix multiplication and applying the activation function; third, compute the time component to satisfy the hyperboloid constraint.

During training, it is necessary to perform every step as the gradient must propagate to the parameters, however, the computation of $\mathbf{V}$ does not depend on the input $\mathbf{x}$, but solely on the parameters. Since the parameters remain fixed during inference, once the network is trained, we can significantly accelerate inference by precomputing and caching $V$. This avoids the repeated evaluation of hyperbolic sine and cosine functions required to compute $\mathbf{v}^{(i)}$ (as seen in \Cref{eq:defn_p} and \Cref{eq:defn_v}) for every forward pass.

Since the mapping from $(\mathbf{W}, \mathbf{v})$ to $\mathbf{V}$ is invertible (\Cref{prop:cache_inverse}), we can discard the original parameters after training and store only $\mathbf{V}$, providing the full speedup without memory overhead.
\begin{proposition} \label{prop:cache_inverse} 
    Defining $\mathbf{v}^{(i)} = (v^{(i)}_1, \overline{\mathbf{v}}^{(i)})$ as in \cref{eq:defn_v} as a function of $\mathbf{w}^{(i)}$ and $b_i$, the function is invertible for $\|\mathbf{w}^{(i)}\|_E > 0$, and the inverse is given by:
    \begin{align}
        \mathbf{w}^{(i)} &= \frac{\|\mathbf{v}^{(i)}\|_\mathcal{L}}{\|\overline{\mathbf{v}}^{(i)}\|_E} \overline{\mathbf{v}}^{(i)}, \\
        b_i &= - \frac{\|\mathbf{v}^{(i)}\|_\mathcal{L}}{\sqrt{\kappa}} \arcsinh\left( \frac{v^{(i)}_1}{\|\mathbf{v}^{(i)}\|_\mathcal{L}} \right).
    \end{align}
\end{proposition}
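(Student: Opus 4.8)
The plan is to treat the map in \cref{eq:defn_v} coordinate by coordinate and reduce it to a one‑dimensional change of variables on the ``hyperbolic angle'' $-\sqrt{\kappa}\,b_i/\|\mathbf{w}^{(i)}\|_E$. Write $r:=\|\mathbf{w}^{(i)}\|_E$, which is strictly positive by hypothesis, and $\alpha:=-\sqrt{\kappa}\,b_i/r$, so that \cref{eq:defn_v} reads simply $v^{(i)}_1=r\sinh\alpha$ and $\overline{\mathbf{v}}^{(i)}=\cosh(\alpha)\,\mathbf{w}^{(i)}$. The first step is the identity $\|\mathbf{v}^{(i)}\|_\mathcal{L}^2=-(v^{(i)}_1)^2+\|\overline{\mathbf{v}}^{(i)}\|_E^2=r^2(\cosh^2\alpha-\sinh^2\alpha)=r^2$, i.e.\ $\|\mathbf{v}^{(i)}\|_\mathcal{L}=\|\mathbf{w}^{(i)}\|_E=r$ (this also follows from the fact, already used in the text, that parallel transport preserves the tangent norm and that $\mathbf{w}^{(i)}$, viewed as a tangent vector at $\mathbf{o}$, has zero time component so its Lorentz and Euclidean norms coincide). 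In particular $\mathbf{v}^{(i)}$ is spacelike, and since $\cosh\alpha\ge 1>0$ we also get $\|\overline{\mathbf{v}}^{(i)}\|_E=r\cosh\alpha\ge r>0$; thus both denominators appearing in the claimed inverse are strictly positive, which is precisely where the assumption $\|\mathbf{w}^{(i)}\|_E>0$ is used.

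Next I would recover the two parameters. Since $\cosh\alpha>0$, the vector $\overline{\mathbf{v}}^{(i)}=\cosh(\alpha)\,\mathbf{w}^{(i)}$ is a positive scalar multiple of $\mathbf{w}^{(i)}$, so $\mathbf{w}^{(i)}$ is obtained by rescaling $\overline{\mathbf{v}}^{(i)}$ to Euclidean norm $r=\|\mathbf{v}^{(i)}\|_\mathcal{L}$; because $\|\overline{\mathbf{v}}^{(i)}\|_E=r\cosh\alpha$, the rescaling factor is $\|\mathbf{v}^{(i)}\|_\mathcal{L}/\|\overline{\mathbf{v}}^{(i)}\|_E$, which gives the first formula. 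For the bias, $v^{(i)}_1/\|\mathbf{v}^{(i)}\|_\mathcal{L}=\sinh\alpha$, hence $\alpha=\arcsinh\!\big(v^{(i)}_1/\|\mathbf{v}^{(i)}\|_\mathcal{L}\big)$, and unwinding $\alpha=-\sqrt{\kappa}\,b_i/r$ with $r=\|\mathbf{v}^{(i)}\|_\mathcal{L}$ yields the second formula. This already shows that the stated map is a left inverse of \cref{eq:defn_v}, hence \cref{eq:defn_v} is injective on $\{\mathbf{w}^{(i)}:\|\mathbf{w}^{(i)}\|_E>0\}$.

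Finally I would check that it is also a right inverse, so that \cref{eq:defn_v} is a genuine bijection onto its image with the stated inverse. Given $\mathbf{v}^{(i)}$ in that image — equivalently, any $\mathbf{v}^{(i)}$ with $\|\overline{\mathbf{v}}^{(i)}\|_E>|v^{(i)}_1|$ — set $r:=\|\mathbf{v}^{(i)}\|_\mathcal{L}>0$ and $\alpha:=\arcsinh(v^{(i)}_1/r)$; then $\cosh\alpha=\sqrt{1+(v^{(i)}_1/r)^2}=\|\overline{\mathbf{v}}^{(i)}\|_E/r$ because $r^2+(v^{(i)}_1)^2=\|\overline{\mathbf{v}}^{(i)}\|_E^2$. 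Substituting $\mathbf{w}^{(i)}=(r/\|\overline{\mathbf{v}}^{(i)}\|_E)\,\overline{\mathbf{v}}^{(i)}$ and $b_i=-r\alpha/\sqrt{\kappa}$ back into \cref{eq:defn_v} then returns $v^{(i)}_1=r\sinh\alpha$ and $\overline{\mathbf{v}}^{(i)}=\cosh(\alpha)\,\mathbf{w}^{(i)}$ by construction, and $\|\mathbf{w}^{(i)}\|_E=r>0$, so the preimage lies in the domain. The only mild subtlety — and the one place to be careful — is this bookkeeping of domain and image: one should record that the image of \cref{eq:defn_v} is exactly the set of spacelike vectors with nonzero spatial part, and that on this set neither denominator vanishes; the rest is the routine $\cosh^2-\sinh^2=1$ manipulation together with the odd‑function property of $\arcsinh$.
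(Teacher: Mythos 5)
Your proof is correct and follows essentially the same route as the paper's: reduce to the hyperbolic angle $-\sqrt{\kappa}\,b_i/\|\mathbf{w}^{(i)}\|_E$, use $\|\mathbf{v}^{(i)}\|_\mathcal{L}=\|\mathbf{w}^{(i)}\|_E$ (via norm preservation or $\cosh^2-\sinh^2=1$), and invert with $\arcsinh$ and a rescaling of $\overline{\mathbf{v}}^{(i)}$. Your extra verification that the formula is also a right inverse on the image of spacelike vectors is a small bonus beyond what the paper records, but not a different argument.
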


\begin{proof}
    See Appendix~\ref{prop:proof:cache_inverse}
\end{proof}

\subsection{Batch Normalization} \label{sec:batchnorm}
Prior works on hyperbolic deep learning use Batch Normalization for its useful properties such as making models easier to train, and regularization \citep{ioffeBatchNormalizationAccelerating2015, chenFullyHyperbolicNeural2022, vanSpengler2023PoincareResNet, bdeir2023FullyVision}. However, the principle behind BatchNorm clashes with the properties of hyperbolic space: exponential growth in embeddings causes an exponential increase in variance, which in turn results in an exponentially large denominator in the scaling of BatchNorm. Conversely, layers with a small output result in exponentially small denominators. In our experience, this results in highly unstable training dynamics. Even so, simply removing BatchNorm makes the network very hard to train.

As an alternative, we look to Weight Normalization + mean-only BatchNorm \citep{salimansWeightNormalizationSimple2016}: we reparametrise the weight vectors as follows: \begin{equation}
    \mathbf{w}^{(i)} = g \frac{\mathbf{a}}{\|\mathbf{a}\|},
\end{equation}
where $g$ is the scale, and $\mathbf{a}$ is an unnormalized weight vector. Combining these two methods makes the network much easier to train, and does not have the same problem with exponentially small or large variances.

\section{Experiments}
\label{sec:experiments}
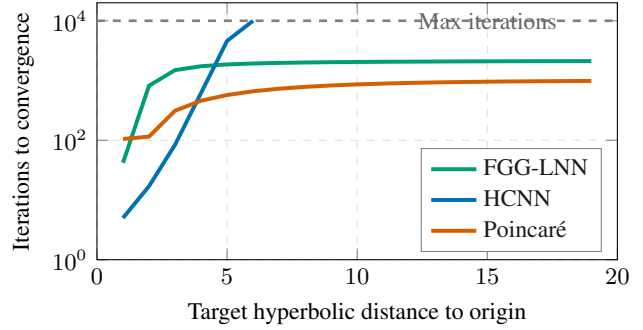
\begin{figure}
    \centering
    \begin{tikzpicture}
\begin{semilogyaxis}[
    width=8.5cm,
    height=5cm,
    xlabel={Target hyperbolic distance to origin},
    ylabel={Iterations to convergence},
    legend pos=south east,
    legend cell align=left,
    legend style={font=\small, fill=white, fill opacity=0.9, draw=black!50},
    grid=major,
    grid style={dashed, gray!25},
    tick label style={font=\small},
    label style={font=\small},
    title style={font=\normalsize},
    xmin=0, xmax=20,
    ymin=1, ymax=20000,
]

\addplot[color=icmlgreen, line width=1.5pt, solid, mark=none] coordinates {
    (1, 42) (2, 818) (3, 1493) (4, 1733) (5, 1859) 
    (6, 1928) (7, 1972) (8, 2003) (9, 2026) (10, 2044) 
    (11, 2058) (12, 2069) (13, 2079) (14, 2087) (15, 2095) 
    (16, 2101) (17, 2106) (18, 2109) (19, 2111)
};
\addlegendentry{FGG-LNN}

\addplot[color=icmlblue, line width=1.5pt, solid, mark=none] coordinates {
    (1, 5) (2, 17) (3, 84) (4, 618) (5, 4603) (6, 10000)
};
\addlegendentry{HCNN}

\addplot[color=icmlorange, line width=1.5pt, solid, mark=none] coordinates {
    (1, 105) (2, 115) (3, 312) (4, 459) (5, 572) 
    (6, 660) (7, 729) (8, 783) (9, 826) (10, 861) 
    (11, 888) (12, 911) (13, 929) (14, 943) (15, 955) 
    (16, 965) (17, 972) (18, 979) (19, 984)
};
\addlegendentry{Poincaré}

\addplot[color=black!50, line width=1pt, dashed, forget plot] coordinates {
    (0, 10000) (20, 10000)
};
\node[anchor=west, font=\footnotesize, text=black!60] at (axis cs:12,10000) {Max iterations};

\end{semilogyaxis}
\end{tikzpicture}
    \caption{Number of iterations required to fit a hyperbolic hyperplane with a given hyperbolic distance to the origin.}
    \label{fig:toy}
\end{figure}

\subsection{Geometric Analysis}
\label{subsec:toy}
We first set up a toy experiment to study the geometric effect of our layer; we measure the number of optimization steps needed to fit hyperbolic hyperplanes with increasing hyperbolic distances to the origin. The idea behind this is the following: embedding a hierarchy with low distortion requires consistent spacing between embedded parent-child pairs \citep{sarkar2012}, which means linear growth in total distance of the root node to the leaves. It follows that if a linear layer cannot efficiently fit hyperbolic hyperplanes whose distance from the origin grows linearly, it also cannot embed hierarchies with low distortion. 

As predicted in \Cref{sec:pathology}, we see that, for the default Lorentz linear layer proposed by \citet{chenFullyHyperbolicNeural2022}, the necessary number of optimization steps grows exponentially with hyperbolic distance. For the Poincaré linear layer and FGG-LNN, the number of required steps starts off higher, but from there grows linearly rather than exponentially. The Poincaré linear layer requires fewer iterations than our linear layer, because the Poincaré model has the advantage that linear growth in hyperbolic norm does not correspond to exponential growth in Euclidean norm, rather, the growth in Euclidean norm is sublinear. Still, we can conclude from this toy experiment that our FGG-LNN solves the pathology suffered by other Lorentz linear layers.

But, this begs the question: how can networks like HCNN \citep{bdeir2023FullyVision} still achieve state-of-the-art accuracy? The HCNN MLR layer \textit{is} formulated using distances to hyperplanes, which means that logits grow linearly with hyperbolic distance. On the other hand, as we proved in \Cref{sec:pathology}, their linear layers can only grow hyperbolic distances logarithmically. To answer this question, we investigate what happens to the hyperbolic distance to the origin of the layer outputs. What we see is that the linear layers in HCNN do not significantly grow the space (\Cref{fig:spatial_norm}), as predicted. Rather, the hyperbolic distance remains more or less constant, until the last layer. Here, the linear layer learns to give an output that is very close to the origin, and the BatchNorm scale parameter (gamma) learns to grow, culminating in a blowup of the hyperbolic distance from ~0.5 to ~9. By contrast, in our network, the hyperbolic distance steadily grows over the network layers, with the BatchNorm consistently pulling it down somewhat through its centering, and the linear layers then growing the output norm again.

\begin{figure}
    \centering
    \begin{tikzpicture}
    \begin{axis}[
        width=8.5cm,
        height=6cm,
        font=\rmfamily,
        title style={font=\bfseries},
        xlabel style={font=\small},
        ylabel style={font=\small},
        yticklabel style={text width=1.25em, align=left},
        tick label style={font=\footnotesize},
        legend style={
            font=\scriptsize,
            at={(0.02,0.98)},
            anchor=north west,
            draw=black!50,
            fill=white,
            fill opacity=0.9
        },
        xlabel={Layer index},
        ylabel={Mean hyperbolic distance to origin},
        ymax=13,
        grid=major,
        grid style={dashed, gray!30},
        axis line style={draw=black!80},
        cycle list name=color list,
        xtick={0,5,10,15,19},
    ]

    \addplot[
        color=icmlblue,
        mark=*,
        mark size=1.5pt,
        line width=1.0pt,
        solid
    ] coordinates {
        (0, 2.4213) (1, 2.3926) (2, 2.0756) (3, 2.5855) (4, 1.2486) 
        (5, 1.7739) (6, 2.8280) (7, 2.4195) (8, 2.6041) (9, 1.7782) 
        (10, 1.5828) (11, 2.6339) (12, 2.4201) (13, 2.6339) (14, 1.8044) 
        (15, 1.5312) (16, 2.9002) (17, 2.2975) (18, 2.6346) (19, 0.5360)
    };
    \addlegendentry{Theirs - Linear}

    \addplot[
        color=icmlblue,
        mark=square*,
        mark size=1.5pt,
        line width=1.0pt,
        dashed,
        mark options={solid}
    ] coordinates {
        (0, 1.4742) (1, 1.1457) (2, 1.3559) (3, 1.3624) (4, 1.2665) 
        (5, 1.2421) (6, 1.6775) (7, 1.5662) (8, 1.9490) (9, 1.4067) 
        (10, 1.5326) (11, 2.0529) (12, 2.2039) (13, 2.7249) (14, 2.0940) 
        (15, 1.0449) (16, 2.0100) (17, 2.8535) (18, 2.1366) (19, 9.0000)
    };
    \addlegendentry{Theirs - BatchNorm}

    \addplot[
        color=icmlgreen,
        mark=*,
        mark size=1.5pt,
        line width=1.0pt,
        solid
    ] coordinates {
        (0, 2.4760) (1, 3.6905) (2, 3.2995) (3, 4.5494) (4, 5.1429) 
        (5, 6.3260) (6, 6.9738) (7, 5.5432) (8, 7.4261) (9, 7.0450) 
        (10, 8.1946) (11, 8.6862) (12, 7.1797) (13, 9.2319) (14, 8.7198) 
        (15, 9.4076) (16, 8.9741) (17, 8.3374) (18, 9.1496) (19, 9.0856)
    };
    \addlegendentry{Ours - Linear}

    \addplot[
        color=icmlgreen,
        mark=square*,
        mark size=1.5pt,
        line width=1.0pt,
        dashed,
        mark options={solid}
    ] coordinates {
        (0, 2.5856) (1, 2.7492) (2, 2.7391) (3, 3.9767) (4, 4.8586) 
        (5, 5.8697) (6, 6.3916) (7, 4.9079) (8, 6.8820) (9, 6.4763) 
        (10, 7.7264) (11, 8.2245) (12, 6.9537) (13, 8.9232) (14, 8.3803) 
        (15, 9.2266) (16, 8.7536) (17, 8.2495) (18, 8.9776) (19, 8.8941)
    };
    \addlegendentry{Ours - BatchNorm}

    \end{axis}
\end{tikzpicture}
    \caption{Growth of the hyperbolic distance to the origin over the network's layers. Our FGG-LNN grows the distance gradually as depth increases. HCNN has the distance remain relatively constant, until the final BatchNorm layer, where the distance explodes to ~9.}
    \label{fig:spatial_norm}
\end{figure}
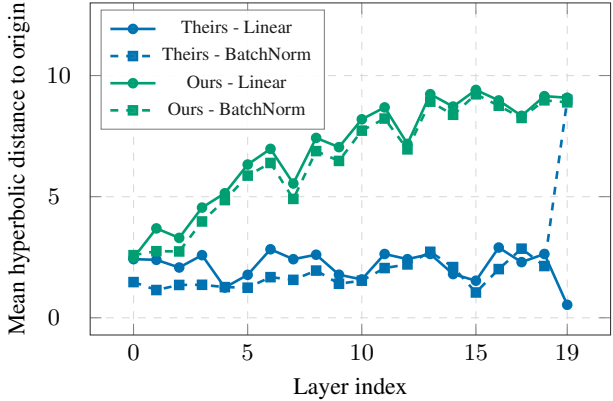

\subsection{Image Classification}
Apart from the geometric interpretation, we validate that our network is still able to learn well, given its completely new architecture. For this, we perform some minor experiments on CIFAR-10 and CIFAR-100 using the same hyperparameters as HCNN \citet{bdeir2023FullyVision}, and compare the accuracies to a standard Euclidean ResNet18 \citep{heDeepResidualLearning2016} and to HCNN (\Cref{tab:cifar}). 

\begin{table}[]
\centering
\caption{Test accuracy, averaged across three runs. Accuracies for all three models are comparable, indicating that the efficiency gains of FGG-LNN do not come at the cost of accuracy.}
\label{tab:cifar}
\begin{tabular}{lcc}
\toprule
\textbf{Method}    & \multicolumn{1}{c}{\textbf{CIFAR-10}} & \textbf{CIFAR-100}                     \\
\midrule
FGG-LNN   & 94.66 $\pm$ 0.04 & 76.14 $\pm$ 0.11 \\
HCNN      & 94.61 $\pm$ 0.25 & 76.09 $\pm$ 0.23 \\
Euclidean & 94.58 $\pm$ 0.41 & 76.65 $\pm$ 0.24 \\
\bottomrule
\end{tabular}
\end{table}

We see that a neural network using our linear layer achieves similar performance to a standard Euclidean ResNet18 and HCNN. We note that the numbers we report for HCNN are slightly lower than in their paper; we use a validation set for checkpoint selection, whereas the original work selected checkpoints based on test accuracy.

\textbf{Performance}

To compare the performance of our proposed method to prior work, we visualise the training time and inference speed of four different networks: a Euclidean baseline ResNet-18 \citep{heDeepResidualLearning2016}, a Poincaré ResNet-20 \citep{vanSpengler2023PoincareResNet}, a Lorentz ResNet-18 as implemented by \citet{bdeir2023FullyVision}, and our FGG-LNN. For FGG-LNN, during inference, we cache $\mathbf{V}$ as described in \Cref{subsec:cache}.

We compare the performance of our model in \Cref{tab:runtime}. Our model substantially closes the gap between hyperbolic models and Euclidean models: our training times are 2.9x as fast as HCNN \citep{bdeir2023FullyVision}, and 8.3x as fast as Poincaré ResNet \citep{vanSpengler2023PoincareResNet}. This allows us to train a FGG-LNN ResNet18 model on CIFAR-100 in around 70 minutes, and to our knowledge, it is the fastest-training fully hyperbolic model that exists. For inference times, we employ the caching strategy for our model described in \Cref{subsec:cache}. For moderate input and output dimensions (16 to 256), our inference times are 3x as fast as the linear layer formulated by \citeauthor{chenFullyHyperbolicNeural2022}, and 6.8x as fast as the Poincaré linear layer \citep{shimizuHyperbolicNeuralNetworks2020, vanSpengler2023PoincareResNet}. For an input and output dimension of 4096, the effect decreases, as more of the total computation time is being taken up by the matrix multiplication inside the linear layer.

Altogether, our network achieves similar performance as previous Lorentz neural networks in approximately 1/3 of the training time of previous Lorentz neural networks, and also runs inference in approximately 1/3 of the time.

\begin{table}[]
\caption{Runtime comparison as multiples of the Euclidean baseline. Inference times measured for a single linear layer; training times for ResNet18 on CIFAR-100.}
\label{tab:runtime}
\small
\begin{tabular}{l ccc c}
\toprule
 & \multicolumn{3}{c}{Inference} & Training \\
\cmidrule(lr){2-4} \cmidrule(lr){5-5}
Method & $d=16$ & $d=256$ & $d=4096$ & ResNet18 \\
\midrule
FGG-LNN  & \textbf{2.39x} & \textbf{2.10x} & \textbf{1.06x} & \textbf{4.24x} \\
HCNN     & 7.06x          & 6.18x          & 1.26x          & 12.25x \\
Poincaré & 16.34x         & 14.37x         & 1.94x          & 35.21x \\
\bottomrule
\end{tabular}
\end{table}

\section{Conclusion}
In this paper, we identified a pathology in existing Lorentz fully connected layers where the hyperbolic norm of the output scales logarithmically with the number of gradient descent steps. To solve this problem, we propose a fully connected layer in the Lorentz model based on the distance to hyperplanes interpretation. We show theoretically and experimentally that our design solves this problem. We investigate how this design interacts with Lorentz BatchNorm, and propose to use WeightNorm + centering-only BatchNorm to stabilise training. Furthermore, by introducing the Lorentzian activation functions and an efficient caching strategy, we achieve the fastest inference times amongst hyperbolic models. This efficacy, along with several other optimisations, translates to 70-minute training times for a Lorentz ResNet18 architecture in CIFAR-100. We make our code available in the hope that it will accelerate future hyperbolic deep learning research. We anticipate that future research into native Lorentz normalization and adaptive curvature will further enhance the flexibility of these networks, ultimately unlocking the full representational power of hyperbolic spaces for increasingly complex, hierarchical datasets.



\section*{Impact Statement}
This paper presents work whose goal is to advance the field
of Machine Learning. There are many potential societal
consequences of our work, none which we feel must be
specifically highlighted here.

\bibliography{main}
\bibliographystyle{icml2026}

\newpage
\appendix
\onecolumn
\section{Hyperbolic Geometry in the Lorentz Model}
\label{appendix:hyperbolic_ops}
Vectors in the Minkowski ambient space $\mathbb{R}^{n+1}$ of the Lorentz model $\mathbb{L}_{\kappa}^{n}$ can be classified in three categories depending in their square norm $\|v\|_\mathcal{L}$: \textbf{Timelike:} if $\|v\|_\mathcal{L} < 0$; \textbf{Lightlike:} if $\|v\|_\mathcal{L} = 0$; \textbf{Spacelike:} if $\|v\|_\mathcal{L} > 0$. 
Notice that vectors in $\mathbb{L}_{\kappa}^{n}$ are timelike, and vectors in the tangent space are spacelike. 

To perform deep learning in hyperbolic space, we must define operations that allow us to transition between the manifold $\mathbb{L}_\kappa^n$ and its tangent spaces $T_{\mathbf{x}} \mathbb{L}_\kappa^n$, as well as operations to move tangent vectors between different points on the manifold. In the Lorentz model, these operations have closed-form expressions.

The \textbf{exponential map} $\text{Exp}_{\mathbf{x}}(\mathbf{v})$ projects a tangent vector $\mathbf{v} \in T_{\mathbf{x}} \mathbb{L}_\kappa^n$ onto the manifold. Intuitively, it moves a distance $\|\mathbf{v}\|_{\mathcal{L}}$ along the geodesic starting at $\mathbf{x}$ in the direction of $\mathbf{v}$. For $\mathbf{v} \neq \mathbf{0}$, it is defined as:
\begin{equation} \label{eq:expmap}
    \text{Exp}_{\mathbf{x}}(\mathbf{v}) = \cosh(\sqrt{\kappa} \|\mathbf{v}\|_{\mathcal{L}}) \mathbf{x} + \frac{\sinh(\sqrt{\kappa} \|\mathbf{v}\|_{\mathcal{L}})}{\sqrt{\kappa} \|\mathbf{v}\|_{\mathcal{L}}} \mathbf{v}.
\end{equation}
When $\mathbf{v} = \mathbf{0}$, $\text{Exp}_{\mathbf{x}}(\mathbf{0}) = \mathbf{x}$.

The \textbf{logarithmic map} $\text{Log}_{\mathbf{x}}(\mathbf{y})$ is the inverse of the exponential map. It maps a point $\mathbf{y} \in \mathbb{L}_\kappa^n$ back to the tangent space $T_{\mathbf{x}} \mathbb{L}_\kappa^n$. The resulting vector $\mathbf{v}$ represents the direction and distance from $\mathbf{x}$ to $\mathbf{y}$ in the tangent space:
\begin{equation} \label{eq:logmap}
    \text{Log}_{\mathbf{x}}(\mathbf{y}) = \frac{d(\mathbf{x}, \mathbf{y})}{\sinh(\sqrt{\kappa} d(\mathbf{x}, \mathbf{y}))} (\mathbf{y} + \kappa (\mathbf{x} \circ \mathbf{y}) \mathbf{x}),
\end{equation}
where $d(\mathbf{x}, \mathbf{y})$ is the geodesic distance defined in the preliminaries.

The \textbf{parallel transport} $\Gamma_{\mathbf{x} \to \mathbf{y}}(\mathbf{v})$ maps a vector $\mathbf{v} \in T_{\mathbf{x}} \mathbb{L}_\kappa^n$ to $T_{\mathbf{y}} \mathbb{L}_\kappa^n$ along the unique geodesic connecting $\mathbf{x}$ and $\mathbf{y}$ while preserving the Lorentz inner product, and thus the norm.

In the Lorentz model, the parallel transport of a vector $\mathbf{v}$ from $\mathbf{x}$ to $\mathbf{y}$ is given by:
\begin{equation} \label{eq:parallel_transport}
    \Gamma_{\mathbf{x} \to \mathbf{y}}(\mathbf{v}) = \mathbf{v} + \frac{\kappa(\mathbf{y} \circ \mathbf{v})}{1 - \kappa(\mathbf{x} \circ \mathbf{y})} (\mathbf{x} + \mathbf{y}).
\end{equation}
This operation is crucial for operations like weight updates or moving gradient information across the manifold.

\section{Geometric Interpretation of the Hyperplane}
\label{appendix:geometric_interpretation}

To complement the algebraic derivation of the Lorentz linear layer, we provide a visual intuition of the decision boundaries in Figure~\ref{fig:3D_2D_LNN}. While the algebraic formulation relies on ambient vectors and inner products, the geometric construction relies on the intersection of the manifold with specific subspaces.

\begin{figure*}[htbp]
    \centering
    \begin{subfigure}[b]{0.3\textwidth}
        \centering
        \includegraphics[width=\textwidth]{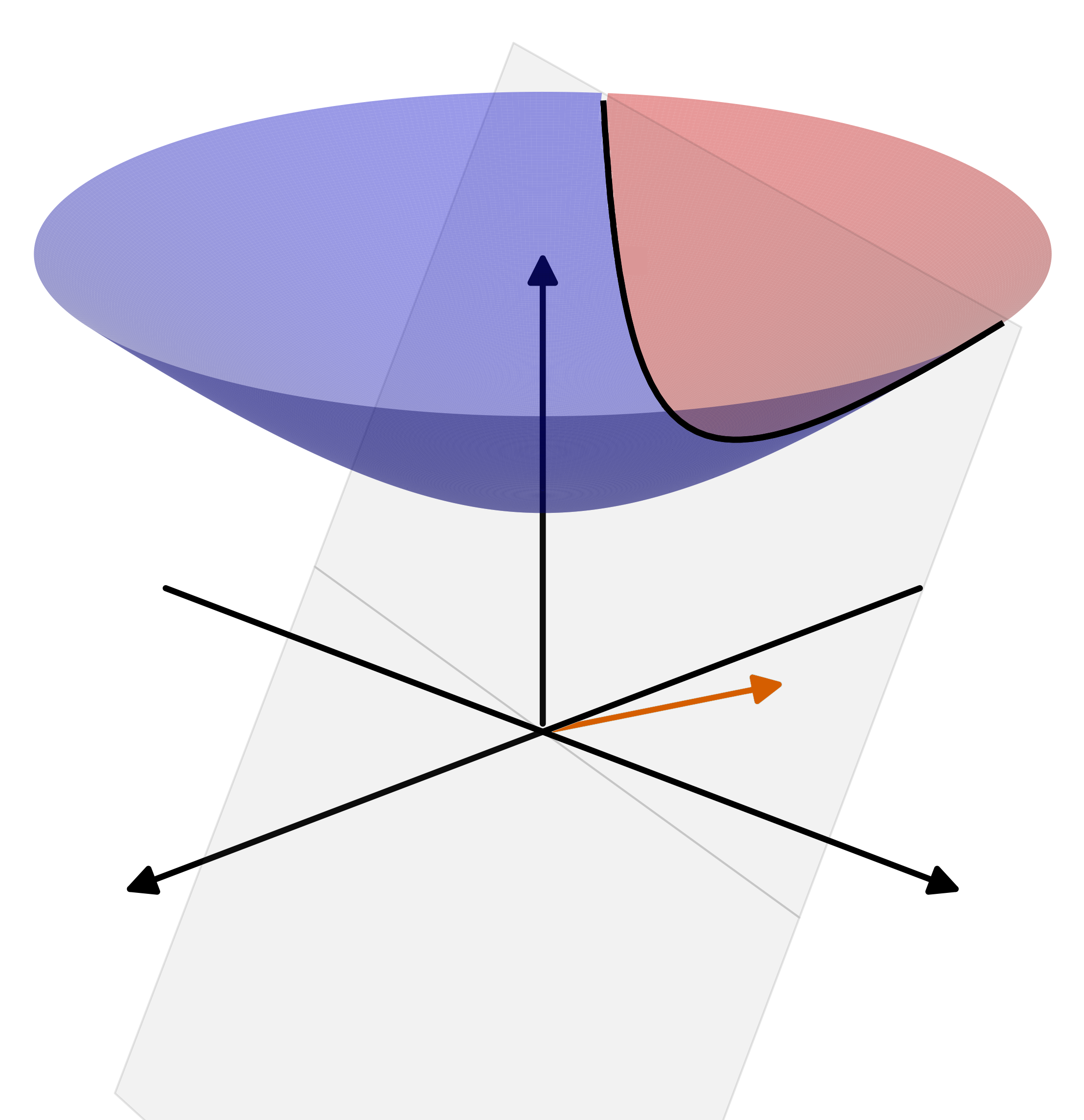}
        \caption{}
        \label{fig:img1}
    \end{subfigure}
    \hfill 
    \begin{subfigure}[b]{0.3\textwidth}
        \centering
        \includegraphics[width=\textwidth]{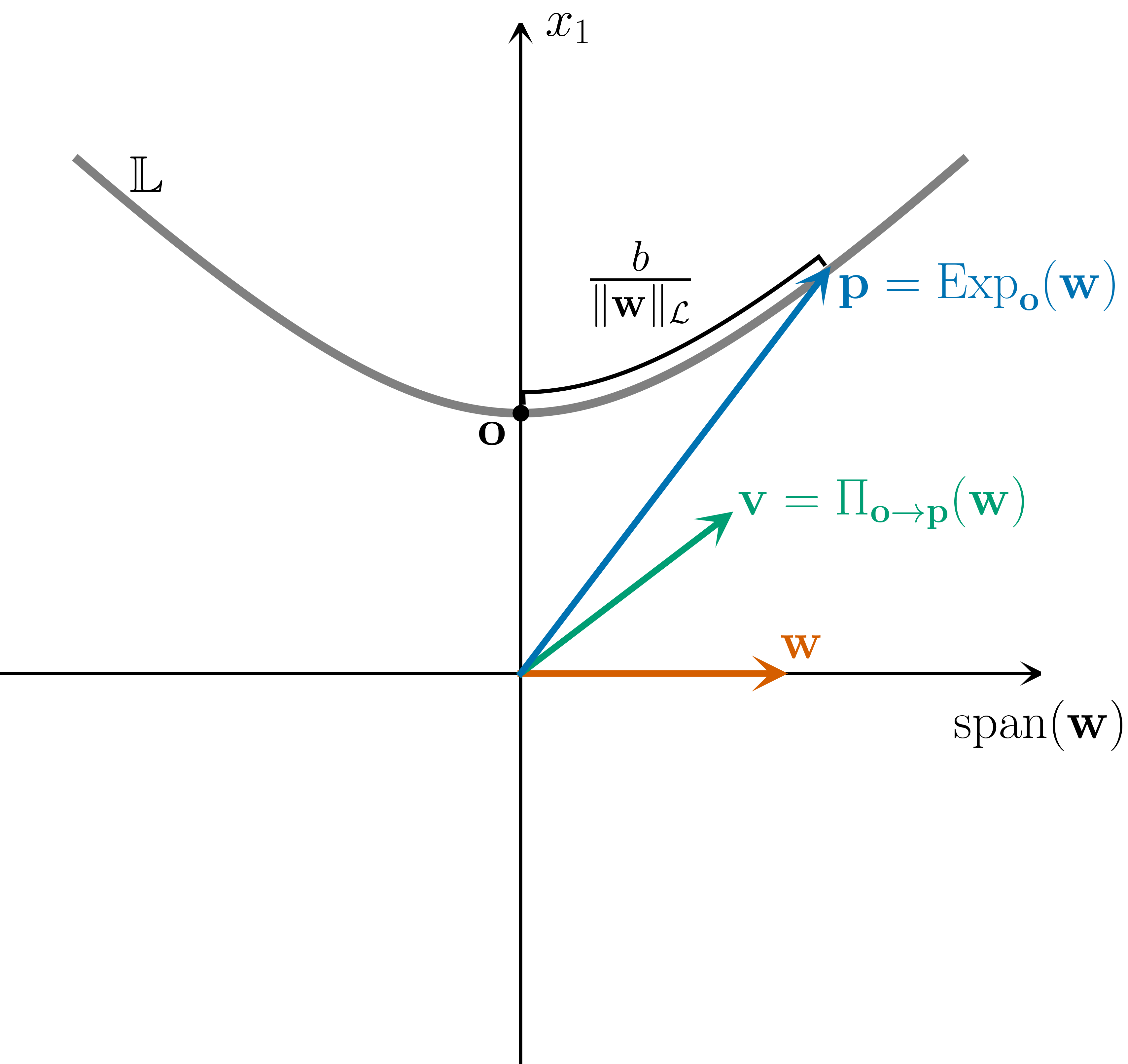}
        \caption{}
        \label{fig:img2}
    \end{subfigure}
    \hfill
    \begin{subfigure}[b]{0.3\textwidth}
        \centering
        \includegraphics[width=\textwidth]{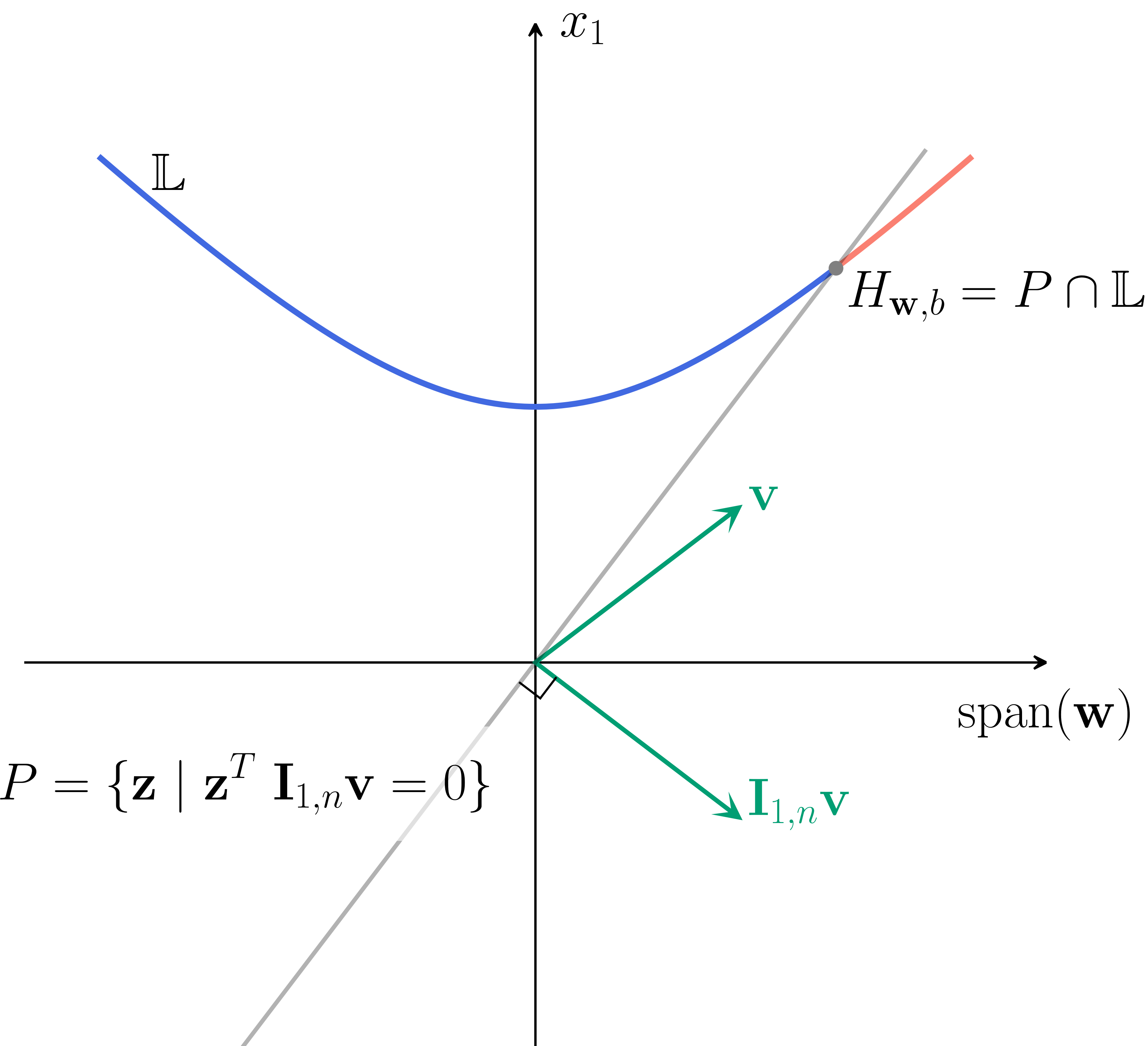}
        \caption{}
        \label{fig:img3}
    \end{subfigure}
    
    \caption{\textbf{Geometric construction of the Lorentz hyperplanes.} \textbf{(a)} A 3D visualization of the Lorentz model $\mathbb{L}^2$ and geodesic hyperplane with discriminative regions in blue and red. The orange vector is the weight parameter $\mathbf{w}$ of the hyperplane. The gray plane $P$ is the set of points in the ambient space Minkowski orthogonal to $\mathbf{v}$.
    \textbf{(b)} and \textbf{(c)} are planar sections of the hyperboloid in \textbf{(a)} intersecting it with the subspace $\text{span}(\mathbf{e}_1, \mathbf{w})$. This yields a one-dimensional hyperbola.
    \textbf{(b)} Parametrization of the hyperplane reference point: the weight vector $\mathbf{w}$ and bias $b$ define an anchor point $\mathbf{p}$ on the manifold via the exponential map. The distance from the origin $\mathbf{o}$ to $\mathbf{p}$ is $\frac{b}{\|\mathbf{w}\|_\mathcal{L}}$.
    \textbf{(c)} The simplified hyperplane condition: the geodesic hyperplane $H_{\mathbf{w},b}$ is the intersection of the hyperboloid with the plane $P$ that are orthogonal to the parallel-transported weight vector v in the ambient Minkowski space $P=\{\mathbf{z} \ | \ \mathbf{z} \circ \mathbf{v} = \mathbf{z} ^T \mathbf{I}_{1,n} \mathbf{v}=0\}$.}
    \label{fig:3D_2D_LNN}
\end{figure*}

\paragraph{The Reference Point (Panel b).} 
As defined in \cref{eq:defn_p}, the bias $b$ and the weight vector $\mathbf{w}$ determine a specific reference point $\mathbf{p}$ on the manifold. Geometrically, this corresponds to starting at the origin $\mathbf{o}$ and shooting a geodesic in the direction of the weights $\mathbf{w}$. The magnitude of the displacement is determined by the ratio $b / \|\mathbf{w}\|_{\mathcal{L}}$. This mechanism effectively ``pushes" the decision boundary away from the origin, analogous to how the bias term in Euclidean geometry translates a hyperplane from the origin.

\paragraph{The Orthogonality Condition (Panel c).}
Once the reference point $\mathbf{p}$ is established, the orientation of the decision boundary is defined by the normal vector. However, because the tangent space $T_{\mathbf{p}}\mathbb{L}_\kappa^n$ changes at every point, we cannot simply use $\mathbf{w}$ as the normal vector at $\mathbf{p}$. Instead, we must parallel transport $\mathbf{w}$ from the origin to $\mathbf{p}$, resulting in the vector $\mathbf{v}$.
    
Panel (c) illustrates the simplification shown in Proposition~\ref{prop:lorentz_hyperplane_simplification}. The geodesic hyperplane (the blue curve) is exactly the intersection of the hyperbolic manifold with a Euclidean subspace $P$ (the gray line) passing through the origin. This subspace $P$ is defined as the set of all ambient points Minkowski-orthogonal to $\mathbf{v}$. This highlights a key property of the Lorentz model: geodesic submanifolds can be represented as linear intersections in the ambient space.

\section{Proofs} \label{appendix:proofs}
Here we provide the formal derivations and proofs for the propositions and theorems discussed in the main text. The following section establishes the geometric foundations of our Lorentzian framework, characterizes the representation capacity of hyperbolic layers, and derives the practical implementation identities required for efficient computation.

\begin{restatedproposition}
{prop:upperbound}

    Let $\mathbf{W}_n$ be a weight matrix after $n$ steps of optimization with updates bounded in Frobenius norm by $\delta$ (i.e., $\|\Delta \mathbf{W}_t\|_F \leq \delta$). The amount by which $\mathbf{W}_n$ can stretch the space is upper bounded by $O(\ln n)$: we have
    $$d_\mathcal{H}(\mathbf{o}, \mathbf{y}) - d_\mathcal{H}(\mathbf{o}, \mathbf{x}) = \mathcal{O}(\ln n).$$
\end{restatedproposition}
\begin{proof}
    Let $\mathbf{x} = (x_1, \bar{\mathbf{x}}),$ with $x_1 = \sqrt{\|\bar{\mathbf{x}}\|^2 + \frac{1}{\kappa}}$. The Chen layer maps this to $\mathbf{y} = (y_1, \bar{\mathbf{y}})$ where $\bar{\mathbf{y}} = \mathbf{W}\mathbf{x}$ and $y_1 = \sqrt{\|\mathbf{W} \mathbf{x}\|^2 + \frac{1}{\kappa}}$. Using $\text{arccosh}(z) = \ln(z + \sqrt{z^2-1}),$ we can write:
    $$d_\mathcal{H}(\mathbf{o}, \mathbf{y}) - d_\mathcal{H}(\mathbf{o}, \mathbf{x}) = \frac{1}{\sqrt{\kappa}} \ln \left(\frac{\sqrt{\kappa} y_1 + \sqrt{\kappa y_1^2 - 1}}{\sqrt{\kappa} x_1 + \sqrt{\kappa x_1^2 - 1}}\right).$$
    For any $z \geq 1$, we have $z \leq z + \sqrt{z^2 - 1} \leq 2z$. Applying the upper bound on the numerator, and the lower bound on the denominator, we get:
    $$d_\mathcal{H}(\mathbf{o}, \mathbf{y}) - d_\mathcal{H}(\mathbf{o}, \mathbf{x}) \leq \frac{1}{\sqrt{\kappa}} \ln \left(\frac{2 \sqrt{\kappa}y_1}{\sqrt{\kappa} x_1}\right) = \frac{1}{\sqrt{\kappa}}\left(\ln 2 + \ln \frac{y_1}{x_1} \right).$$
    Now we bound the ratio $\frac{y_1}{x_1}$. From the hyperboloid constraint $x_1^2 - \|\bar{\mathbf{x}}\|^2 = \frac{1}{\kappa}$: 
    $$\|\mathbf{x}\|^2 = x_1^2 + \|\bar{\mathbf{x}}\|^2 = 2x_1^2 - \frac{1}{\kappa}.$$
    Using $y_1^2 = \|\mathbf{W}\mathbf{x}\|^2 + \frac{1}{\kappa} \leq \|\mathbf{W}\|_F^2 \|\mathbf{x}\|^2 + \frac{1}{\kappa}$:
    $$y_1^2 \leq \|\mathbf{W}\|_F^2 (2x_1^2 - \frac{1}{\kappa}) + \frac{1}{\kappa} = 2\|\mathbf{W}\|^2_F x_1^2 - \frac{\|\mathbf{W}\|_F^2 - 1}{\kappa}.$$

    For $\|\mathbf{W}\|_F \geq 1$, the last term is non-positive, so $y_1^2 \leq 2\|\mathbf{W}\|_F^2 x_1^2$, giving:
    $$\frac{y_1}{x_1} \leq \sqrt{2}\|\mathbf{W}\|_F.$$
    Using the triangle inequality $$\|\mathbf{W}_n\|_F \leq \|\mathbf{W}_0\|_F + n\delta:$$
    $$d_\mathcal{H}(\mathbf{o}, \mathbf{y}) - d_\mathcal{H}(\mathbf{o}, \mathbf{x}) \leq \frac{1}{\sqrt{\kappa}}\left(\ln 2 + \ln \sqrt{2} + \ln(\|\mathbf{W}_0\|_F + n\delta)\right) = \mathcal{O}(\ln n).$$
\end{proof}

\begin{restatedproposition}
{prop:treedistance}
    Embedding an $m$-ary tree with low distortion in $\mathbb{L}_\kappa^D$ requires a minimum distance between nodes and their children of
    $$s = \Omega\left(\frac{\ln m}{(D-1)\sqrt{\kappa}}\right).$$
\end{restatedproposition}
\begin{proof}
    In a tree, all parent-child distances equal $1$. For a low-distortion embedding, these must map to approximately equal hyperbolic distances; call this distance $s$. Consequently, all nodes at depth $h$ lie on a hypersphere of radius $R_h = sh$ centered at the root.

    At depth $h$, there are $m^h$ nodes. For low distortion, sibling nodes (which are at tree distance $2$ from each other) must be separated by a hyperbolic distance of at least $\Omega(s)$. This means each node occupies a geodesic ball of radius $\Omega(s)$ on the hypersphere, and these balls must be disjoint.

    In a $D$-dimensional hyperbolic space $\mathbb{L}^D_\kappa$, the surface area of a hypersphere with radius $R$ is \citep{ratcliffe2019FoundationsManifolds}:
    \begin{align*}
        A(R) &= \frac{2\pi^{D/2}}{\Gamma(D/2)} \frac{\sinh^{D-1}(\sqrt{\kappa}R)}{{\kappa^{(D-1)/2}}}.
    \end{align*}

    For large $R$, using $\sinh(x) \approx \frac{1}{2}e^x:$

$$A(R) = \Theta(\exp((D-1)\sqrt{\kappa}R)),$$ where we treat $D$ and $\kappa$ as fixed.

Each node must be separated from its siblings by hyperbolic distance at least $\Omega(s).$ Therefore, around each node we can place a disjoint geodesic ball of radius $\Omega(s)$ on the hypersphere. The area of each ball is some constant $c > 0$ depending on $s, D$ and $\kappa$. Since there are $m^h$ nodes at depth $h$, we require:
$$A(R_h) \geq c \cdot m^h = \Omega(m^h).$$

Substituting:
$$\exp((D-1)\sqrt{\kappa} \cdot sh) = \Omega(m^h).$$
Taking logarithms:
$$(D-1)\sqrt{\kappa} \cdot sh = \Omega(h \ln m).$$
Dividing by $h$:
$$s = \Omega\left(\frac{\ln m}{(D-1)\sqrt{\kappa}}\right).$$
\end{proof}

\begin{restatedproposition}
    {cor:chen} 
    The linear layer formulation by \citet{chenFullyHyperbolicNeural2022} requires $\Omega(e^h)$ gradient updates to embed a hierarchy of depth $h$ with low distortion.
\end{restatedproposition}

\begin{proof}
    For embedding a tree, we place the root at the origin. From \Cref{prop:upperbound}, after $n$ gradient steps, the layer can increase hyperbolic distance from the origin by at most $\mathcal{O}(\ln n)$. Since the root is at distance $0$, the maximum reachable distance is $\mathcal{O}(\ln n)$. 
    
    From \Cref{prop:treedistance}, embedding an $m$-ary tree of depth $h$ with low distortion requires nodes at depth $h$ to lie at hyperbolic distance at least
    $$R_h = sh = \Omega\left(\frac{h \ln m}{(D-1)\sqrt{\kappa}}\right) = \Omega(h).$$
    Thus, for the embedding to succeed, we need the reachable distance to be at least $R_h$: 
    $$\mathcal{O}(\ln n) \geq \Omega(h),$$
    which requires $\ln n = \Omega(h),$
    so
    $$n = \Omega(e^h).$$
\end{proof}

\begin{restatedproposition}{prop:lorentz_hyperplane_simplification}
    In the Lorentz model, the condition for the geodesic hyperplane defined in \cref{eq:general_hyperplane}, given by $\left\langle \text{Log}_{\mathbf{p}_i}(\mathbf{z}), \Gamma_{\mathbf{o} \to \mathbf{p}_i}(\mathbf{w}^{(i)}) \right\rangle_{\mathbf{p}_i} = 0$, simplifies to the ambient orthogonality condition:
    \begin{equation}
        \label{eq:lorentz_hyperplane}
        H_{\mathbf{w}^{(i)}, b_i} = \left\{ \mathbf{z} \in \mathbb{L}_{\kappa}^{n} \mid \mathbf{z} \circ \mathbf{v}^{(i)} = 0 \right\}
    \end{equation}
    where $\mathbf{v}^{(i)} = \Gamma_{\mathbf{o} \to \mathbf{p}_i}(\mathbf{w}^{(i)})$.
\end{restatedproposition}

\begin{proof}
    \label{prop:proof:lorentz_hyperplane_simplification}
    The hyperplane is defined as the set of points $\mathbf{z}$ such that the tangent vector pointing from $\mathbf{p}_i$ to $\mathbf{z}$ is orthogonal to the normal vector $\mathbf{v}^{(i)}$.
    In the Lorentz model, the logarithmic map $\text{Log}_{\mathbf{p}_i}(\mathbf{z})$ is proportional to the projection of $\mathbf{z}$ onto the tangent space $T_{\mathbf{p}_i}\mathbb{L}_\kappa^n$:
    \begin{equation*}
        \text{Log}_{\mathbf{p}_i}(\mathbf{z}) = \alpha \left( \mathbf{z} + \kappa (\mathbf{p}_i \circ \mathbf{z}) \mathbf{p}_i \right)
    \end{equation*}
    where $\alpha \neq 0$ is a scalar coefficient depending on the distance between $\mathbf{p}_i$ and $\mathbf{z}$. Using the bilinearity of the Lorentz inner product, the orthogonality condition becomes:
    \begin{align}
        \text{Log}_{\mathbf{p}_i}(\mathbf{z}) \circ \mathbf{v}^{(i)} &= 0 \\
        \implies \left( \mathbf{z} + \kappa (\mathbf{p}_i \circ \mathbf{z}) \mathbf{p}_i \right) \circ \mathbf{v}^{(i)} &= 0 \\
        \mathbf{z} \circ \mathbf{v}^{(i)} + \kappa (\mathbf{p}_i \circ \mathbf{z}) (\mathbf{p}_i \circ \mathbf{v}^{(i)}) &= 0 \label{eq:proof_distributive}
    \end{align}
    
    By definition, $\mathbf{v}^{(i)}$ is a vector in the tangent space at $\mathbf{p}_i$ (i.e., $\mathbf{v}^{(i)} \in T_{\mathbf{p}_i} \mathbb{L}_\kappa^n$). A property of the Lorentz model is that tangent vectors are orthogonal in the ambient space to their base point 
    implying $\mathbf{p}_i \circ \mathbf{v}^{(i)} = 0$.
    Substituting this into \cref{eq:proof_distributive}, the second term vanishes, yielding:
    \begin{equation}
        \mathbf{z} \circ \mathbf{v}^{(i)} = 0
    \end{equation}
\end{proof}

\begin{restatedtheorem}{theorem:point_to_hyperplane_distance}
    In the Lorentz model, the hyperbolic distance from a point $\mathbf{x} \in \mathbb{L}_\kappa^n$ to the hyperplane $H_{\mathbf{w}^{(i)}, b_i} = \{ \mathbf{z} \in \mathbb{L}_\kappa^n \mid \mathbf{z} \circ \mathbf{v}^{(i)} = 0 \}$ is given by:
    \begin{equation}
        d(\mathbf{x}, H_{\mathbf{w}^{(i)}, b_i}) = \frac{1}{\sqrt{\kappa}} \text{arcsinh} \left( \sqrt{\kappa} \frac{|\mathbf{x} \circ \mathbf{v}^{(i)}|}{\|\mathbf{v}^{(i)}\|_\mathcal{L}}\right)
    \end{equation}
\end{restatedtheorem}

\begin{proof} 
    \label{theorem:proof:point_to_hyperplane_distance}
    Let $\mathbf{y} \in H_{\mathbf{w}^{(i)}, b_i}$ be the point on the hyperplane that minimizes the distance to $\mathbf{x}$. In a Riemannian manifold, the shortest path between a point and a closed submanifold is realized by a geodesic that is orthogonal to the submanifold at the point of intersection \citep{chavelRiemannianGeometryAModernIntroduction2006}. Consequently, the tangent vector of the geodesic connecting $\mathbf{x}$ to $\mathbf{y}$ at the point $\mathbf{y}$ must be aligned with the normal vector $\mathbf{v}^{(i)}$. 
    
    This implies that the geodesic is contained in a hyperplane in the ambient Minkowski space spanned by $\mathbf{x}$ and $\mathbf{v}^{(i)}$ \citep{ratcliffe2019FoundationsManifolds}. We can therefore express the unnormalized vector $\tilde{\mathbf{y}}$ pointing to $\mathbf{y}$ as a linear combination:
    \begin{equation}
        \tilde{\mathbf{y}} = \mathbf{x} + \alpha \mathbf{v}^{(i)}
    \end{equation}
    for some scalar $\alpha \in \mathbb{R}$. Since $\mathbf{y}$ lies on the hyperplane, it must satisfy the orthogonality condition $\tilde{\mathbf{y}} \circ \mathbf{v}^{(i)} = 0$. Substituting the linear combination:
    \begin{align}
        (\mathbf{x} + \alpha \mathbf{v}^{(i)}) \circ \mathbf{v}^{(i)} &= 0 \\
        \mathbf{x} \circ \mathbf{v}^{(i)} + \alpha (\mathbf{v}^{(i)} \circ \mathbf{v}^{(i)}) &= 0 \\
        \implies \alpha &= - \frac{\mathbf{x} \circ \mathbf{v}^{(i)}}{\|\mathbf{v}^{(i)}\|_\mathcal{L}^2}
    \end{align}
    Substituting $\alpha$ back into the expression for $\tilde{\mathbf{y}}$, we obtain:
    \begin{equation}
        \tilde{\mathbf{y}} = \mathbf{x} - \frac{\mathbf{x} \circ \mathbf{v}^{(i)}}{\|\mathbf{v}^{(i)}\|_\mathcal{L}^2} \mathbf{v}^{(i)}
    \end{equation}
    To find the point $\mathbf{y}$ on the manifold $\mathbb{L}_\kappa^n$, we normalize $\tilde{\mathbf{y}}$ using the radial projection. First, we compute its squared Lorentz norm. Recalling that $\|\mathbf{x}\|_\mathcal{L}^2 = -1/\kappa$:
    \begin{align}
        \|\tilde{\mathbf{y}}\|_\mathcal{L}^2 &= \left( \mathbf{x} - \frac{\mathbf{x} \circ \mathbf{v}^{(i)}}{\|\mathbf{v}^{(i)}\|_\mathcal{L}^2} \mathbf{v}^{(i)} \right) \circ \left( \mathbf{x} - \frac{\mathbf{x} \circ \mathbf{v}^{(i)}}{\|\mathbf{v}^{(i)}\|_\mathcal{L}^2} \mathbf{v}^{(i)} \right) \\
        &= \mathbf{x} \circ \mathbf{x} - 2 \frac{(\mathbf{x} \circ \mathbf{v}^{(i)})^2}{\|\mathbf{v}^{(i)}\|_\mathcal{L}^2} + \frac{(\mathbf{x} \circ \mathbf{v}^{(i)})^2}{\|\mathbf{v}^{(i)}\|_\mathcal{L}^4} \|\mathbf{v}^{(i)}\|_\mathcal{L}^2 \\
        &= -\frac{1}{\kappa} - \frac{(\mathbf{x} \circ \mathbf{v}^{(i)})^2}{\|\mathbf{v}^{(i)}\|_\mathcal{L}^2} \\
        &= - \frac{1}{\kappa} \left( 1 + \frac{\kappa (\mathbf{x} \circ \mathbf{v}^{(i)})^2}{\|\mathbf{v}^{(i)}\|_\mathcal{L}^2} \right)
    \end{align}

    The distance $d(\mathbf{x}, \mathbf{y})$ is given by the distance formula:
    \begin{equation}
        d(\mathbf{x}, \mathbf{y}) = \frac{1}{\sqrt{\kappa}} \text{arccosh}(-\kappa (\mathbf{x} \circ \mathbf{y}))
    \end{equation}
    Note that $\mathbf{y} = \frac{1}{\sqrt{-\kappa \|\tilde{\mathbf{y}}\|_\mathcal{L}^2}} \tilde{\mathbf{y}}$. We compute the inner product $\mathbf{x} \circ \mathbf{y}$:
    \begin{align}
        \mathbf{x} \circ \mathbf{y} &= \frac{1}{\sqrt{-\kappa \|\tilde{\mathbf{y}}\|_\mathcal{L}^2}} \left( \mathbf{x} \circ \tilde{\mathbf{y}} \right) \\
        &= \frac{1}{\sqrt{-\kappa \|\tilde{\mathbf{y}}\|_\mathcal{L}^2}} \left( \mathbf{x} \circ \mathbf{x} - \frac{(\mathbf{x} \circ \mathbf{v}^{(i)})^2}{\|\mathbf{v}^{(i)}\|_\mathcal{L}^2} \right) \\
         &= \frac{1}{\sqrt{-\kappa \|\tilde{\mathbf{y}}\|_\mathcal{L}^2}} \|\tilde{\mathbf{y}}\|_\mathcal{L}^2 \\
         &= - \frac{1}{\sqrt{\kappa}} \sqrt{-\kappa \|\tilde{\mathbf{y}}\|_\mathcal{L}^2} \\
         &= - \frac{1}{\sqrt{\kappa}} \sqrt{1 + \frac{\kappa (\mathbf{x} \circ \mathbf{v}^{(i)})^2}{\|\mathbf{v}^{(i)}\|_\mathcal{L}^2}}
    \end{align}
    Substituting this back into the distance formula:
    \begin{align}
        d(\mathbf{x}, H_{\mathbf{w}^{(i)}, b_i}) &= \frac{1}{\sqrt{\kappa}} \text{arccosh}\left( \sqrt{1 + \frac{\kappa (\mathbf{x} \circ \mathbf{v}^{(i)})^2}{\|\mathbf{v}^{(i)}\|_\mathcal{L}^2}} \right)
    \end{align}
    Using the identity $\text{arccosh}(\sqrt{1+z^2}) = \text{arcsinh}(|z|)$, we conclude:
    \begin{equation}
        d(\mathbf{x}, H_{\mathbf{w}^{(i)}, b_i}) = \frac{1}{\sqrt{\kappa}} \text{arcsinh}\left( \sqrt{\kappa} \frac{|\mathbf{x} \circ \mathbf{v}^{(i)}|}{\|\mathbf{v}^{(i)}\|_\mathcal{L}} \right)
    \end{equation}
\end{proof}

\begin{restatedproposition}{prop:sign_equivalence}
    Let $\mathbf{p}_i \in \mathbb{L}_\kappa^n$ be the reference point of a hyperplane defined by the normal vector $\mathbf{v}^{(i)} \in T_{\mathbf{p}_i}\mathbb{L}_\kappa^n$. For any point $\mathbf{x} \in \mathbb{L}_\kappa^n$ (where $\mathbf{x} \neq \mathbf{p}_i$), the sign of the Riemannian inner product between the logarithmic map and the normal vector is equivalent to the sign of the Minkowski inner product in the ambient space:
    \begin{equation}
        \text{sign}\left( \langle \text{Log}_{\mathbf{p}_i}(\mathbf{x}), \mathbf{v}^{(i)} \rangle_{\mathbf{p}_i} \right) = \text{sign}(\mathbf{x} \circ \mathbf{v}^{(i)})
    \end{equation}
\end{restatedproposition}

\begin{proof}\label{prop:proof:sign_equivalence}
    Recall that the Riemannian metric $\langle \cdot, \cdot \rangle_{\mathbf{p}_i}$ on the tangent space $T_{\mathbf{p}_i}\mathbb{L}_\kappa^n$ is defined as the restriction of the Minkowski inner product $\circ$. We substitute the explicit formula for the logarithmic map in the Lorentz model:
    \begin{equation}
        \text{Log}_{\mathbf{p}_i}(\mathbf{x}) = \alpha \left( \mathbf{x} + \kappa (\mathbf{p}_i \circ \mathbf{x}) \mathbf{p}_i \right)
    \end{equation}
    where the scalar coefficient is given by $\alpha = \frac{\text{arccosh}(-\kappa (\mathbf{p}_i \circ \mathbf{x}))}{\sqrt{\kappa^2 (\mathbf{p}_i \circ \mathbf{x})^2 - 1}}$. Since $\mathbf{x}$ and $\mathbf{p}_i$ are points on the hyperbolic manifold and $\mathbf{x} \neq \mathbf{p}_i$, the distance between them is strictly positive, implying $\alpha > 0$.
    
    Substituting this into the inner product with the normal vector $\mathbf{v}^{(i)}$:
    \begin{align}
        \langle \text{Log}_{\mathbf{p}_i}(\mathbf{x}), \mathbf{v}^{(i)} \rangle_{\mathbf{p}_i} &= \left( \alpha \left( \mathbf{x} + \kappa (\mathbf{p}_i \circ \mathbf{x}) \mathbf{p}_i \right) \right) \circ \mathbf{v}^{(i)} \\
        &= \alpha \left( (\mathbf{x} \circ \mathbf{v}^{(i)}) + \kappa (\mathbf{p}_i \circ \mathbf{x}) (\mathbf{p}_i \circ \mathbf{v}^{(i)}) \right)
    \end{align}
    Since $\mathbf{v}^{(i)}$ lies in the tangent space at $\mathbf{p}_i$, it satisfies the orthogonality condition $\mathbf{p}_i \circ \mathbf{v}^{(i)} = 0$. The second term in the parentheses vanishes, yielding:
    \begin{equation}
        \langle \text{Log}_{\mathbf{p}_i}(\mathbf{x}), \mathbf{v}^{(i)} \rangle_{\mathbf{p}_i} = \alpha (\mathbf{x} \circ \mathbf{v}^{(i)})
    \end{equation}
    Because $\alpha$ is a strictly positive scalar, it does not affect the sign of the expression. Therefore:
    \begin{equation}
        \text{sign}\left( \alpha (\mathbf{x} \circ \mathbf{v}^{(i)}) \right) = \text{sign}(\mathbf{x} \circ \mathbf{v}^{(i)})
    \end{equation}
\end{proof}

\begin{restatedproposition}{prop:dist_limit_equivalence}
    In the limit where the curvature $\kappa \to 0^+$, the following two expressions for the signed distance tend to the same value:
    \begin{align*}
        d_1 &= \frac{\|\mathbf{w}^{(i)}\|_{\mathcal{L}}}{\sqrt{\kappa}} \arcsinh\left( \sqrt{\kappa} \frac{ \mathbf{x} \circ \mathbf{v}^{(i)}}{\|\mathbf{v}^{(i)}\|_{\mathcal{L}}} \right) \\
        d_2 &= \frac{1}{\sqrt{\kappa}} \arcsinh\left( \sqrt{\kappa} (\mathbf{x} \circ \mathbf{v}^{(i)}) \right)
    \end{align*}
    Specifically, both reduce to the Lorentz product $\mathbf{x} \circ \mathbf{v}^{(i)}$.
\end{restatedproposition}

\begin{proof} 
    \label{prop:proof:dist_limit_equivalence}
    We analyze the limit of both expressions using the Taylor series expansion of the inverse hyperbolic sine function around zero. $\arcsinh(z) = z + \mathcal{O}(z^3)$ for $z \to 0$.
    
    First, we consider $d_1$. As $\kappa \to 0^+$, the argument inside the $\arcsinh$ term approaches zero. Applying the first-order approximation:
    \begin{align}
        \lim_{\kappa \to 0^+} d_1 &= \lim_{\kappa \to 0^+} \frac{\|\mathbf{w}^{(i)}\|_{\mathcal{L}}}{\sqrt{\kappa}} \left( \sqrt{\kappa} \frac{ \mathbf{x} \circ \mathbf{v}^{(i)}}{\|\mathbf{v}^{(i)}\|_{\mathcal{L}}} \right) \\
        &= \|\mathbf{w}^{(i)}\|_{\mathcal{L}} \frac{ \mathbf{x} \circ \mathbf{v}^{(i)}}{\|\mathbf{v}^{(i)}\|_{\mathcal{L}}} \\
        &= \mathbf{x} \circ \mathbf{v}^{(i)}
    \end{align}
    
    Next, we consider $d_2$. Similarly, as $\kappa \to 0^+$, the argument $\sqrt{\kappa} (\mathbf{x} \circ \mathbf{v}^{(i)})$ approaches zero. Applying the approximation:
    \begin{align}
        \lim_{\kappa \to 0^+} d_2 &= \lim_{\kappa \to 0^+} \frac{1}{\sqrt{\kappa}} \left( \sqrt{\kappa} (\mathbf{x} \circ \mathbf{v}^{(i)}) \right) \\
        &= \mathbf{x} \circ \mathbf{v}^{(i)}
    \end{align}
    Both formulations approach to the Minkowski inner product between the input vector and the normal vector in the ambient space, which corresponds to the standard Euclidean dot product in the limit of zero curvature.
\end{proof}

\begin{restatedproposition}{prop:lorentzian_convergence}
    In the limit where the curvature $\kappa \to 0^+$, $h_{\text{Lorentzian},\kappa}$ recovers $h$:
    \begin{align*}
        \lim_{\kappa \to 0^+} h_{\text{Lorentzian},\kappa}(x) = h(x)
    \end{align*}
\end{restatedproposition}
\begin{proof} \label{prop:proof:lorentzian_convergence}
    We aim to show that:
    \begin{equation}
        \lim_{\kappa \to 0^+} \frac{1}{\sqrt{\kappa}} \text{arcsinh}\left(\sqrt{\kappa}h\left(\frac{1}{\sqrt{\kappa}}\sinh{(\sqrt{\kappa}x)}\right)\right) = h(x)
    \end{equation}
    Let $u = \sqrt{\kappa}$. As $\kappa \to 0^+$, it follows that $u \to 0^+$. Substituting $u$ into the expression, we examine the limit:
    \begin{equation}
        L = \lim_{u \to 0^+} \frac{1}{u} \text{arcsinh}\left(u h\left(\frac{1}{u}\sinh{(ux)}\right)\right)
    \end{equation}
    We utilize the first-order Taylor expansion for the hyperbolic sine and its inverse:
    \begin{align}
        \sinh(y) &= y + \mathcal{O}(y^3) \\
        \text{arcsinh}(z) &= z + \mathcal{O}(z^3)
    \end{align}
    First, analyzing the inner term $\frac{1}{u}\sinh(ux)$:
    \begin{equation}
        \frac{1}{u}\sinh(ux) = \frac{1}{u} (ux + \mathcal{O}(u^3)) = x + \mathcal{O}(u^2)
    \end{equation}
    Assuming $h$ is continuous, we have:
    \begin{equation}
        h\left(\frac{1}{u}\sinh(ux)\right) = h(x + \mathcal{O}(u^2)) = h(x) + \mathcal{O}(u^2)
    \end{equation}
    Now, let $Z = u h\left(\frac{1}{u}\sinh{(ux)}\right)$. Substituting the approximation:
    \begin{align}
        Z &= u (h(x) + \mathcal{O}(u^2)) = u h(x) + \mathcal{O}(u^3) \\
         \text{arcsinh}(Z) &= u h(x) + \mathcal{O}(u^3)
    \end{align}
    Finally, substituting this back into the limit $L$:
    \begin{align}
        L &= \lim_{u \to 0^+} \frac{1}{u} (u h(x) + \mathcal{O}(u^3)) \\
        &= \lim_{u \to 0^+} \left( h(x) + \mathcal{O}(u^2) \right) \\
        &= h(x)
    \end{align}
    Thus, the Lorentzian activation function recovers the Euclidean function $h(x)$ when the curvature tends to zero.
\end{proof}

\begin{restatedproposition}{prop:activations_to_output}
    Given $\mathbf{a} \in \mathbb{R}^n$, the vector $\mathbf{y} \in \mathbb{L}^n$ whose distances to the $n$ hyperplanes passing through the origin and orthogonal to the spatial axes are given by $\mathbf{a}$ is:
    \begin{equation}
        \label{eq:output_point}
        \mathbf{y} = 
    \begin{pmatrix}
        y_1 \\
        \overline{\mathbf{y}}
    \end{pmatrix}
    \end{equation}
    such that:
    \begin{align}
        \overline{{\mathbf{y}}} &= \frac{1}{\kappa} \sinh{\left( \sqrt{\kappa} \mathbf{a} \right)} \\
        y_1 &= \sqrt{\frac{1}{\kappa} + \|\overline{\mathbf{y}}\|_E^2} 
    \end{align}
\end{restatedproposition}
\begin{proof} 
    \label{prop:proof:activations_to_output}
    Note that the $i$-th spatial axis is in the $i+1$-th dimension. Let $\mathbf{e}^{(i)}$ in $\mathbb{R}^{n+1}$ such that $[\mathbf{e}^{(i)}]_j = \delta_{ij}$ then:
    \begin{align}
        a_i &= d_\mathbb{L}(\mathbf{x}, H_\mathbf{e^{(i+1)}}) & \text{for }i= 1,...,n \\
        &= \frac{1}{\sqrt{\kappa}}\arcsinh(\sqrt{\kappa} (\mathbf{y} \circ \mathbf{e}^{(i+1)})) \\
        &= \frac{1}{\sqrt{\kappa}}\arcsinh(\sqrt{\kappa} y_{i+1}) \\
        \implies y_{i+1} &= \frac{1}{\sqrt{\kappa}}\sinh(\sqrt{\kappa} a_{i})  
    \end{align}
    And for $\mathbf{y}$ to lie in $\mathbb{L}^n$ 
    \begin{equation}
    y_1 = \sqrt{\frac{1}{\kappa} + \|\overline{\mathbf{y}}\|_E^2} 
    \end{equation}
\end{proof}

\begin{restatedproposition}{prop:solve}
    Our linear layer, defined in \Cref{eq:compute_output}, can learn to embed a tree of depth $h$ with low distortion in $\mathcal{O}(h)$ gradient updates.
\end{restatedproposition}
\begin{proof}
    We will prove this by showing that our linear layer can generate outputs $\mathbf{y}$ whose hyperbolic distance from the origin scales with the number of gradient updates $n$ as $d_\mathcal{H}(\mathbf{o}, \mathbf{y}) = \Theta(n)$.

    We constrain ourselves to only learning $\mathbf{b}$ and keeping $\mathbf{W}$
fixed such that its norm is some constant: $$\|\mathbf{w}^{(j)}\|_E = c\  \forall\  j.$$ 
Without loss of generality, we set $\kappa = 1$. Then, from \cref{eq:defn_v} we get:
\begin{equation}
    \mathbf{v}^{(j)} = \left(-\sinh\left(\frac{b_j}{c}\right), \quad \cosh\left(\frac{b_j}{c}\right) \mathbf{w}^{(j)T}\right).
\end{equation}
It is clear that this scales exponentially with $b_j$, as $\sinh(x) = \Theta(e^x)$, and also $\cosh(x) = \Theta(e^x)$. Moreover, as $\mathbf{v}^{(j)}$ is linearly related to the output coordinate $y_j$, it follows that the output $\mathbf{y}$ scales exponentially with $\mathbf{W}$. 
\end{proof}

\begin{restatedproposition}{prop:cache_inverse}
    Defining $\mathbf{v}^{(i)} = (v^{(i)}_1, \overline{\mathbf{v}}^{(i)})$ as in \cref{eq:defn_v} as a function of $\mathbf{w}^{(i)}$ and $b_i$, the function is invertible for $\|\mathbf{w}^{(i)}\|_E > 0$, and the inverse is given by:
    \begin{align}
        \mathbf{w}^{(i)} &= \frac{\|\mathbf{v}^{(i)}\|_\mathcal{L}}{\|\overline{\mathbf{v}}^{(i)}\|_E} \overline{\mathbf{v}}^{(i)} \\
        b_i &= - \frac{1}{\sqrt{\kappa} \|\mathbf{v}^{(i)}\|_\mathcal{L}} \arcsinh\left( \frac{v^{(i)}_1}{\|\mathbf{v}^{(i)}\|_\mathcal{L}} \right)
    \end{align}
\end{restatedproposition}
\begin{proof}
    \label{prop:proof:cache_inverse} 
    We recall that parallel transport preserves the Lorentzian norm of the transported vector. The weight vector $\mathbf{w}^{(i)}$ is defined in the tangent space at the origin $T_{\mathbf{o}}\mathbb{L}_\kappa^n$, where the Lorentzian norm coincides with the Euclidean norm $\|\mathbf{w}^{(i)}\|_\mathcal{L} = \|\mathbf{w}^{(i)}\|_E$. Consequently, the norm of the cached vector $\mathbf{v}^{(i)} = \Gamma_{\mathbf{o} \to \mathbf{p}_i}(\mathbf{w}^{(i)})$ satisfies:
    \begin{equation}
        \|\mathbf{v}^{(i)}\|_\mathcal{L} = \|\mathbf{w}^{(i)}\|_E
    \end{equation}
    Let $\theta = -\frac{\sqrt{\kappa} b_i}{\|\mathbf{w}^{(i)}\|_E}$. From the definition of the time component in \cref{eq:defn_v}, we have $v^{(i)}_1 = \sinh(\theta) \|\mathbf{w}^{(i)}\|_E$. Substituting the norm equality, we write:
    \begin{equation}
        v^{(i)}_1 = \sinh(\theta) \|\mathbf{v}^{(i)}\|_\mathcal{L} \implies \sinh(\theta) = \frac{v^{(i)}_1}{\|\mathbf{v}^{(i)}\|_\mathcal{L}}
    \end{equation}
    Solving for $b_i$ involves isolating $\theta$:
    \begin{align}
        \theta &= \arcsinh\left( \frac{v^{(i)}_1}{\|\mathbf{v}^{(i)}\|_\mathcal{L}} \right) \\
        -\frac{\sqrt{\kappa} b_i}{\|\mathbf{v}^{(i)}\|_\mathcal{L}} &= \arcsinh\left( \frac{v^{(i)}_1}{\|\mathbf{v}^{(i)}\|_\mathcal{L}} \right) \\
        b_i &= - \frac{\|\mathbf{v}^{(i)}\|_\mathcal{L}}{\sqrt{\kappa}} \arcsinh\left( \frac{v^{(i)}_1}{\|\mathbf{v}^{(i)}\|_\mathcal{L}} \right)
    \end{align}
    To recover $\mathbf{w}^{(i)}$, we look at the spatial component $\overline{\mathbf{v}}^{(i)} = \cosh(\theta) \mathbf{w}^{(i)}$. We compute $\cosh(\theta)$ using the identity $\cosh(x) = \sqrt{1+\sinh^2(x)}$:
    \begin{align}
        \cosh(\theta) &= \sqrt{1 + \left(\frac{v^{(i)}_1}{\|\mathbf{v}^{(i)}\|_\mathcal{L}}\right)^2} = \frac{\sqrt{\|\mathbf{v}^{(i)}\|_\mathcal{L}^2 + (v^{(i)}_1)^2}}{\|\mathbf{v}^{(i)}\|_\mathcal{L}}
    \end{align}
    Given that $\mathbf{v}^{(i)}$ is a spacelike vector, $\|\mathbf{v}^{(i)}\|_\mathcal{L}^2 = -(v^{(i)}_1)^2 + \|\overline{\mathbf{v}}^{(i)}\|_E^2$. Substituting this into the numerator:
    \begin{equation}
        \cosh(\theta) = \frac{\sqrt{-(v^{(i)}_1)^2 + \|\overline{\mathbf{v}}^{(i)}\|_E^2 + (v^{(i)}_1)^2}}{\|\mathbf{v}^{(i)}\|_\mathcal{L}} = \frac{\|\overline{\mathbf{v}}^{(i)}\|_E}{\|\mathbf{v}^{(i)}\|_\mathcal{L}}
    \end{equation}
    Finally, we invert the spatial equation:
    \begin{equation}
        \mathbf{w}^{(i)} = \frac{1}{\cosh(\theta)} \overline{\mathbf{v}}^{(i)} = \frac{\|\mathbf{v}^{(i)}\|_\mathcal{L}}{\|\overline{\mathbf{v}}^{(i)}\|_E} \overline{\mathbf{v}}^{(i)}
    \end{equation}
\end{proof}

\end{document}